\begin{document}

\twocolumn[

\aistatstitle{Sparse Bayesian Optimization}
\aistatsauthor{ Sulin Liu$^{*}$ \And Qing Feng$^{*}$ \And David Eriksson$^{*}$ \And Benjamin Letham \And Eytan Bakshy}
\aistatsaddress{ Princeton University \And  Meta \And Meta \And Meta \And Meta }]

\begin{abstract}
Bayesian optimization (BO) is a powerful approach to sample-efficient optimization of black-box objective functions.
However, the application of BO to areas such as recommendation systems often requires taking the interpretability and simplicity of the configurations into consideration, a setting that has not been previously studied in the BO literature.
To make BO useful for this setting, we present several regularization-based approaches that allow us to discover sparse and more interpretable configurations.
We propose a novel differentiable relaxation based on homotopy continuation that makes it possible to target sparsity by working directly with $L_0$ regularization. 
We identify failure modes for regularized BO and develop a hyperparameter-free method, sparsity exploring Bayesian optimization (SEBO) that seeks to simultaneously maximize a target objective and sparsity.
SEBO and methods based on fixed regularization are evaluated on synthetic and real-world problems, and we show that we are able to efficiently optimize for sparsity.
\end{abstract}

\section{INTRODUCTION}
\label{sec:introduction}
Bayesian optimization (BO) is a technique for efficient global optimization that is used for parameter optimization across a wide range of applications, including robotics~\citep{Lizotte2007Automatic,Calandra2015a}, machine learning pipelines~\citep{hutter2011smac, snoek2012practical,turner2021bayesian}, internet systems~\citep{letham2019noisyei, feng2020cbo}, and chemistry~\citep{gomez2018chemical, Felton2021}. 
In many applications, including those just mentioned, it is preferable for the optimized parameters to be sparse. 
In this paper, we define \textit{sparsity} in Bayesian optimization to be the property where the majority of optimized parameters are close to the target parameters that one wishes to regularize towards. For example, the target parameters may be a zero-vector, where setting parameters to zero encourages removal of redundant system configurations. Alternatively, the target parameters may be the default system parameters (status quo), where sparsity favors the fewest modifications for consistency and robustness.
One reason to prefer sparsity is that it increases interpretability, a consideration that has recently attracted a great deal of attention in machine learning~\citep{doshi2017towards, rudin22}.
Interpretability is necessary for humans to be able to understand and evaluate the outputs of complex systems---the types of systems to which BO is often applied.
In policy optimization, sparsity of the control policy provides a natural way for human decision-makers to gain insight into the behavior of the system, and identify potential issues~\citep{UstunRu2016SLIM, HuRuSe2019}. 

Besides interpretability, sparsity can also be beneficial by producing systems that are easier to deploy and maintain, reducing the ``tech debt'' of machine learning systems~\citep{techdebt}. 
As an example, recommender systems are essential to many internet companies, including e-commerce platforms, streaming services, and social media sites~\citep{bobadilla2013recommender}. A typical recommendation process involves two stages, the retrieval and ranking stages~\citep{covington2016deep}. The parameters in the retrieval stage determine the amount of content to be fetched from various sets of candidate pools (\textit{sources}) representing different user interest taxonomies~\citep{wilhelm2018dpp}. Setting parameters to zero means deactivating these sources.
Sparse optimization can find solutions in which low quality sources are entirely turned off, thus simplifying the system and enabling faster development. Similarly in chemistry, a sparse solution may require fewer reagents and steps to synthesize a compound, which reduces experimentation overhead and accelerates the discovery of new compounds.
\begin{figure}[!ht]
\centering
    \begin{minipage}[c]{0.41\textwidth}
    \centering
    \includegraphics[width=0.8\textwidth]{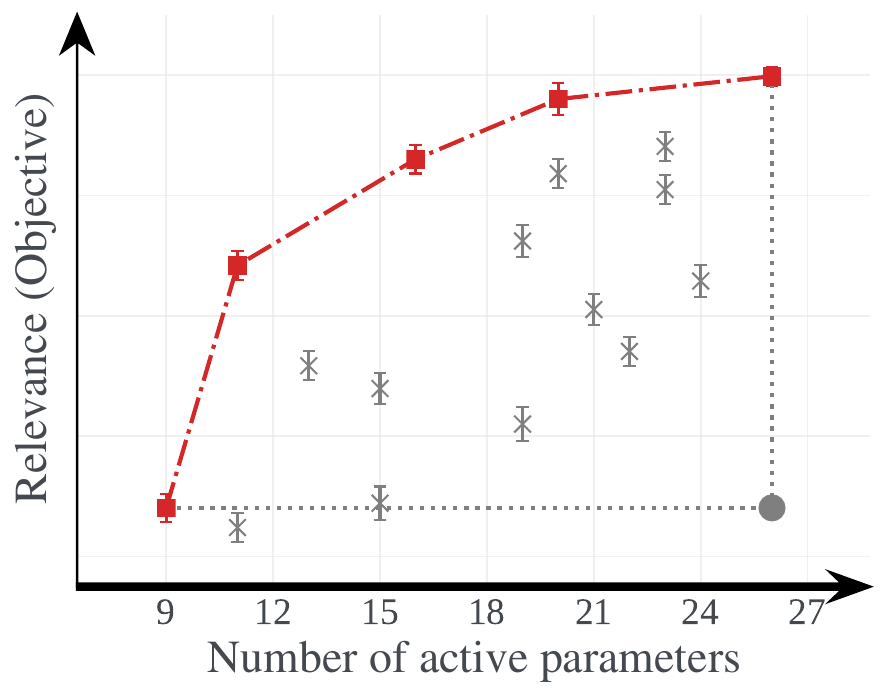}
    \caption{
    %Sparsity-objective trade-off plot from a real-world sourcing experiment at a large Internet firm. Markers show configurations with varying levels of sparsity (number of active parameters, i.e., system simplicity) and objective (relevance, i.e., system performance). Decision makers balance both of these considerations when deciding which configuration to use. %Taking sparsity into account allows the decision-maker to sacrifice source quality for additional sparsity and interpretability.  
    Objective and sparsity trade-offs for a real-world Internet experiment using SEBO. Points indicate recommender system configurations, where the x-axis corresponds to the number of active recommendation sources used, i.e. non-sparse parameters. Grey points indicate sub-optimal designs, while red points represent designs along the Pareto frontier found by SEBO. Decision-makers balance both system simplicity and performance when deciding which configuration to use.
        }
        \label{fig:sourcing_intro}
    \end{minipage}
\end{figure}

Sparsity in machine learning is often achieved via regularization, such as $L_1$ regularization used by the lasso~\citep{lasso}, the group norm penalty used by the group lasso~\citep{grouplasso}, and $L_0$ regularization which directly targets setting elements to zero~\citep{zhang2008multi}.
The purpose of regularization in machine learning is typically to limit overfitting and 
thus improve test accuracy by reducing generalization error~\citep{evgeniou02}. In our setting, sparsity is a separate goal; interpretable sparse configurations will generally not improve the optimization objective, and in fact, may come at some cost to other metrics. This can be seen in the sparsity-objective Pareto frontier shown in Fig.~\ref{fig:sourcing_intro} from a real-world recommender system sourcing experiment conducted at a large Internet firm. The Pareto frontier comprises all of the configurations that produce optimal trade-offs between sparsity and the optimization objective. In many real-world systems, decision makers are willing to trade some amount of objective in order to achieve a higher level of sparsity, because of the interpretability and simplicity benefits that come with sparsity. Thus, unlike a typical BO problem, the ``optimal" point per the decision maker will not necessarily be the one with best objective, but could be some other point on the sparsity-objective Pareto frontier that has more sparsity.

A central aspect of this work is to efficiently learn these trade-offs and offer practitioners a way to balance sparsity and other metrics.  
Sparsity in BO is an important topic that has not yet been addressed in the literature. 
Past work has used regularization in acquisition function optimization or modeling, but not for the purpose of sparsity in design parameters (see Section \ref{sec:background} for a review). 
Our work provides a thorough and broad treatment of sparsity in BO that fills in this gap. The main contributions of this paper are:
\begin{enumerate}[leftmargin=*]
    \setlength\itemsep{0em}
    \item We study different approaches for incorporating sparse regularization into BO, and provide negative theoretical results showing that previously studied forms of regularization can fail to optimize for certain levels of sparsity, regardless of the regularization coefficient.
    \item We draw connections between multi-objective BO and acquisition function regularization, and show how multi-objective BO can be used for automatic selection of the regularization coefficient. 
    We refer to this as the SEBO (``Sparsity Exploring Bayesian Optimization'') method.
    \item We develop a novel relaxation strategy for optimizing directly for $L_0$ sparsity, and show that it significantly outperforms the typical $L_1$ penalty in our context.
    \item We show that combining acquisition function regularization with sparse Gaussian process priors enables sparse optimization in high-dimensional spaces.
    \item We provide the first results on achieving sparsity via BO, in a range of synthetic functions and on three real-world tasks (in systems configuration and AutoML), showing that SEBO is the best approach for sparse BO. 
    We show the breadth of our method by using it to achieve different forms of sparsity such as feature-level and group sparsity.
    \item We provide a new high-dimensional benchmark problem designed to emulate trade-offs found in real-world recommender systems, and show how such systems benefit from increased sparsity.
\end{enumerate}

Section~\ref{sec:background} describes the necessary background and related work. Section~\ref{sec:methods} describes two natural approaches for incorporating sparse regularization into acquisition function optimization, both of which can fail to optimize for some levels of sparsity. 
Section~\ref{sec:moo} discusses a relationship between sparse BO and multi-objective BO, and describes how we can use methods from multi-objective BO to simultaneously optimize for all levels of sparsity. 
We describe how we optimize with $L_0$ regularization in Section~\ref{sec:l0_sparsity}.
We demonstrate the usefulness of our methods by applying them to a set of synthetic and real-world benchmarks in Section~\ref{sec:experiments}.
Finally, we discuss the results in Section~\ref{sec:discussion}.

\section{BACKGROUND AND RELATED WORK}
\label{sec:background}

\paragraph{Bayesian Optimization:}
\citet{shahriari2015review} provide a thorough review of BO. 
In short, the goal is to maximize a black-box function $f: \mathbb{R}^D \rightarrow \mathbb{R}$ over a compact set $\mathcal{B} \subset \mathbb{R}^D$, for simplicity taken as $[0, 1]^D$. 
We assume that $f$ is continuous and bounded on this domain.
At each iteration of optimization, $f$ is modeled with a Gaussian process (GP) given the function evaluations observed so far, producing the normally distributed posterior $f(\mathbf{x}) \sim \mathcal{N}(\mu(\mathbf{x}), \sigma^2(\mathbf{x}))$.
The location of the next function evaluation is selected by maximizing an acquisition function $\alpha(\mathbf{x}):=\mathbb{E}_f[u(\mathbf{x})]$ where $u$ is a utility function that defines the acquisition function.
Typical acquisition functions include expected improvement \citep[EI,][]{jones98} and upper confidence bound \citep[UCB,][]{srinivas10}. 
EI is given by
\begin{equation}\label{eq:EI}
    \alpha_{\textrm{EI}}(\mathbf{x}) = \mathbb{E}_{f} \left[ (f(\mathbf{x}) - f(\mathbf{x}^*))_+ \right],
\end{equation}
where $\mathbf{x}^*$ is the best point observed so far, and the acquisition function has a well-known analytic form when $f$ is a GP.
UCB is similarly computed directly from the marginal posterior,
\begin{equation}\label{eq:UCB}
    \alpha_{\textrm{UCB}}(\mathbf{x}) = \mu(\mathbf{x}) + \sqrt{\beta} \sigma(\mathbf{x}),
\end{equation}
where $\beta$ is a hyperparameter that controls the exploration-exploitation trade-off.
More recently, information-theoretic acquisition functions have been developed~\citep{hernandezlobato2014pes,wang2017maxvalue}.

\paragraph{Regularization in BO:} 
Regularization has been applied to acquisition function optimization, though not for the purpose of sparsity. 
\citet{shahriari2016unbounded} used regularization for unbounded BO, in which there are no bounds on the search space. 
They applied a form of $L_2$ regularization to the EI target value that penalized sampling points far from the initial center of the search space. 
\citet{gonzalez2016batch} used regularization for batch BO, where the penalty discouraged points from being chosen close to points that had already been selected for the batch.  The penalty term was multiplied with the original acquisition function value.

\paragraph{BO with Sparse Models:}
~\citet{saasbo} introduced the sparse axis-aligned subspaces (SAAS) function prior in which a structured sparse prior is induced over the inverse-squared kernel lengthscales $\{\rho_i\}_{i=1}^{d}$ to enable BO in high dimensions. 
The SAAS prior has the form $\tau \sim \mathcal{HC}(\alpha), \rho_{i} \sim \mathcal{HC}(\tau)$ where $\mathcal{HC}$ is the half-Cauchy distribution which concentrates at zero.
The goal of the SAAS prior is to turn off unimportant parameters by shrinking $\rho_i$ to zero, which avoids overfitting in high-dimensional spaces, thus enabling sample-efficient high-dimensional BO.
The global shrinkage parameter $\tau$ controls the overall sparsity: with more data, $\tau$ can be pushed to larger values, adapting the level of sparsity to the data as needed.

While sparsity in the GP model is different from the sparsity we seek here, we will show that combining the SAAS model with acquisition regularization is highly effective for sparse high-dimensional BO. 
By enforcing  regularization in the acquisition function, the parameters identified as unimportant will be set to their baseline values, generating simpler and more interpretable policies.
Other work has studied feature sparsity in GP regression but without considering sparsity in optimization~\citep{oh19, park21}.

\paragraph{Multi-Objective BO:}
Multi-objective BO is used when there are several (often competing) objectives $f_1, \ldots, f_m$ and we wish to recover the Pareto frontier of non-dominated configurations.
A classic method is ParEGO, which applies the standard single-objective EI acquisition function to a random scalarization of the objectives~\citep{parego}. 
Many types of scalarizations have been developed for transforming multi-objective optimization (MOO) problems into single-objective problems~\citep{Ehrgott05}. 
Recent work on multi-objective BO has focused on developing acquisition functions that explicitly target increasing the hypervolume of the known Pareto frontier. 
Acquisition functions in this class, such as Expected Hypervolume Improvement (EHVI), are considered state-of-the-art for multi-objective BO~\citep{yang2019, daulton2020ehvi, daulton2021nehvi}.

\section{ACQUISITION FUNCTION REGULARIZATION}
\label{sec:methods}

\subsection{External Regularization}\label{sec:external}
We use a regularization term $\xi(\mathbf{x})$ to model \textit{sparsity}, which may be an $L_0$ quasinorm to target feature-level sparsity, $\xi(\mathbf{x}) = \|\mathbf{x} - \mathbf{x}^s\|_0$, or can be adjusted for different forms of sparsity such as group sparsity. Here $\mathbf{x}^s$ represents the target point that the decision maker wishes to drive the solution towards, e.g., a zero-vector or the current default parameters (status quo). For our analysis of regularization, we will assume that $\mathbf{x}^s$ is the unique global minimum of $\xi(\mathbf{x})$. 

A straightforward approach for adding regularization is to simply add a regularization penalty directly to the acquisition function. 
This parallels regularized regression techniques like ridge regression and the lasso. 
Given a penalty term $\xi(\mathbf{x})$, we then maximize 
\begin{equation}\label{eq:ext}
\alpha_\textrm{ER}(\mathbf{x} ; \lambda) = \alpha(\mathbf{x}) - \lambda \xi(\mathbf{x})
\end{equation}
to select the next point for evaluation. We refer to this approach as \textit{external regularization} (ER). 
EI with external regularization is:
\begin{equation}
    \label{eq:EI_ext}
    \alpha_{\textrm{EI-ER}}(\mathbf{x} ; \lambda) = \mathbb{E}_{f} \left[ (f(\mathbf{x}) - f(\mathbf{x}^*))_+ \right] - \lambda \xi(\mathbf{x}).
\end{equation}

The regularization coefficient $\lambda$ must be set, just as with classic regularized regression.
This formulation separates the explore/exploit value of a point, in $\alpha$, from its sparsity value, in $\xi$. 
This can perform poorly, because there is necessarily interaction between these two notions of value. 
We provide a negative result showing that external regularization cannot capture certain levels of sparsity.

\begin{proposition}\label{thm:ext}
    Suppose $\alpha(\mathbf{x}) = 0$  for every $\mathbf{x}$ where $\xi(\mathbf{x}) \leq \theta$. 
    Then, for any value of $\lambda > 0$, every maximizer of $\alpha_\textrm{ER}(\mathbf{x} ; \lambda)$ will satisfy $\xi(\mathbf{x}) > \theta$, or will equal $\mathbf{x}^s$.
\end{proposition}
This result is shown in Appendix \ref{sec:proofs}, which also describes how this setting is easily encountered in practice when there is a trade-off between objective and sparsity, as in Fig. \ref{fig:sourcing_intro}. 
Empirically, Proposition \ref{thm:ext} means that once a good non-sparse point has been found, sparse points will not be selected by the regularized acquisition function, regardless of how $\lambda$ is tuned. 
Increasing $\lambda$ will change the maximum of the regularized acquisition function from a non-sparse point directly to the trivial solution of $\mathbf{x}^s$, skipping all levels of sparsity in between. The acquisition function has no way of selecting sparse points that improve over other points with a similar level of sparsity.

\subsection{Internal Regularization}\label{sec:internal}
An alternative approach for adding regularization to the acquisition optimization is to add it directly to the objective function. 
In this approach, instead of using the posterior of $f$ to compute the acquisition function, we compute the acquisition for the posterior of a regularized function:
\begin{equation}\label{eq:linscal}
    g(\mathbf{x}; \lambda) = f(\mathbf{x}) - \lambda \xi(\mathbf{x}).
\end{equation}
We refer to this as \textit{internal regularization} (IR). 
The goal of the acquisition function is then to maximize $g$, which can be made to have a sparse maximizer by appropriately setting $\lambda$. 
With internal regularization, EI becomes
\begin{equation}
\begin{aligned}
    \alpha_{\textrm{EI-IR}}(\mathbf{x} ; \lambda) 
    {} &= \mathbb{E}_{f} \left[ (g(\mathbf{x}) - g(\mathbf{x}^*))_+ \right] \\ 
    &= \mathbb{E}_{f} \left[ (f(\mathbf{x}) - f(\mathbf{x}^*) - \lambda (\xi(\mathbf{x}) - \xi(\mathbf{x}^*)))_+ \right] \label{eq:EI_int}
\end{aligned}
\end{equation}
where $\mathbf{x}^*$ is now the incumbent-best of $g$, not of $f$.
The difference between external and internal regularization depends on the acquisition function. 
It is easy to see that for the UCB acquisition of (\ref{eq:UCB}), they are identical. 
For EI they are not, as seen by comparing (\ref{eq:EI_ext}) and (\ref{eq:EI_int}). 
For EI, internal regularization avoids some of the issues of external regularization by incorporating sparsity directly into the assessment of improvement. 
In (\ref{eq:EI_int}), improvement is measured both in terms of increase of objective and increase in sparsity, and it is measured with respect to an incumbent best that has incorporated the sparsity penalty. 
However, internal regularization can also be incapable of recovering points at every level of sparsity, as we will show now. 
For this result, we are interested in the optimal objective value as a function of sparsity level:
\begin{equation}\label{eq:h}
    h(\theta) = \max_{\mathbf{x} \in \mathcal{B}} f(\mathbf{x}) \textrm{ subject to } \xi(\mathbf{x}) = \theta.
\end{equation}
A trade-off between sparsity and objective would result in $h(\theta)$ increasing with $\theta$, though it need not be strictly increasing. 
We now give the negative result for internal regularization, see Appendix \ref{sec:proofs} for details.
\begin{proposition}\label{thm:int}
    For any $\theta$ in the interior of an interval where $h$ is strictly convex, there is no maximizer of (\ref{eq:linscal}) with $\xi(\mathbf{x}) = \theta$, for any $\lambda > 0$.
\end{proposition}
This result shows that internal regularization can only hope to recover optimal points at all sparsity levels if $h$ is concave on its entire domain. 
This is a strong condition, one unlikely to hold for the types of functions typically of interest in BO, even with simple regularizers. 
Note that this result is independent of the choice of $\lambda$ and the acquisition function used. 
If the desired level of sparsity happens to lie within a region where $h$ is strictly convex, internal regularization can be expected to fail to find the optimum. 
Fig.~\ref{fig:ir_res_plot} in Appendix \ref{sec:proofs} shows an illustration of this result, in a problem where $h$ has a region of strict convexity.

We will see in the empirical results that internal regularization performs better than external regularization, though, consistent with Proposition \ref{thm:int}, can fail to cover the entire objective vs. sparsity trade-off and so neither is the recommended approach for sparse BO. 
In this paper we focus on EI, but both forms of regularization can be applied to any acquisition function, including entropy search methods. 
In entropy search, the acquisition function evaluates points according to their information gain with respect to the current belief about the location or function value of the optimum. 
The information gain will thus depend on the level of sparsity in a similar way as with EI, and so external and internal regularization have similar considerations.

\section{MULTI-OBJECTIVE OPTIMIZATION}
\label{sec:moo}
There are two fundamental challenges with both of the regularization approaches developed in Section \ref{sec:methods}. 
The first is that they both have a regularization coefficient $\lambda$ that must be set. 
In a regression setting, the regularization coefficient is usually set to maximize cross-validation accuracy through hyperparameter optimization, often using grid search or BO~\citep{snoek2012practical}. 
In sparse BO, if there is a known desired level of sparsity, $\lambda$ can be swept in each iteration of optimization to find a value that produces candidates with the desired level of sparsity. 
This significantly increases the overhead of BO by requiring hyperparameter optimization as part of every acquisition optimization.
Furthermore, in real applications the desired level of sparsity is typically not known \textit{a priori}.

When there is a trade-off between interpretability and system performance, the desired level of interpretability will depend on what that trade-off looks like. 
In practice, we thus wish to identify the best-achievable objective at any particular level of sparsity. 
The second challenge is that, per the results of Propositions \ref{thm:ext} and \ref{thm:int}, we may not be able to identify the entire objective vs. sparsity trade-off, no matter how $\lambda$ is swept. 
Depending on the problem, it may be that the sparsity levels of interest cannot be explored via either regularization strategy. 
Both of these challenges can be addressed by viewing sparse BO from the lens of multi-objective BO.

\subsection{Sparse BO as Multi-Objective BO}
In this section we introduce the Sparsity Exploring Bayesian Optimization method (SEBO), which takes a multi-objective approach to sparse BO. Rather than considering $\xi$ as a penalty applied to the objective, we consider $f$ and $-\xi$ to each be objectives that we wish to maximize.

First, we note the following connection between internal regularization and multi-objective BO.

\begin{remark}
Internal regularization can be viewed as a linear scalarization of the two objectives $f$ and $-\xi$, with $\lambda$ the weight. Linear scalarizations are commonly used in MOO~\citep{marler10}---see Appendix \ref{sec:SuppParEGO} for more discussion of the connection between internal regularization and the ParEGO method for multi-objective BO.
\end{remark}

Casting sparse BO as MOO of the objective and sparsity has several advantages. 
It provides a solution for setting the regularization coefficient $\lambda$, since we can use methods from multi-objective BO to optimally balance improvements in $f$ and $\xi$ with the goal of exploring the Pareto frontier. 
We can use powerful approaches such as EHVI to select points that maximize performance for all levels of sparsity, or equivalently, maximize sparsity for all levels of performance, explicitly optimizing for the entire regularization path.
The goal of multi-objective BO is to identify the optimum for every level of sparsity, which enables decision makers to make an informed trade-off between interpretability and other considerations of system performance. State-of-the-art MOO methods also avoid the issues of Propositions \ref{thm:ext} and \ref{thm:int} and are able to explore the entire Pareto front.

In our experiments, we use the EHVI acquisition function. Here, the hypervolume improvement is defined with respect to a worst-case reference point $\mathbf{r} = [r_f, r_\xi]$,  which can be set to estimates for the minimum and maximum values of $f$ and $\xi$ respectively. 
Given a set of observations $X^{\textrm{obs}} = \{\mathbf{x}^1, \ldots, \mathbf{x}^n\}$, the Pareto hypervolume of is defined as
\begin{equation*}
    V(X^{\textrm{obs}}) = \lambda_M \left( \bigcup_{i=1}^n \left( [r_f, r_\xi] \times [f(\mathbf{x}^i), \xi(\mathbf{x}^i)] \right)\right),
\end{equation*}
where $\lambda_M$ denotes the Lebesgue measure. 
The expected hypervolume improvement is computed as
\begin{equation}\label{eq:EHVI}
    \alpha_{\textrm{SEBO}}(\mathbf{x}) = \mathbb{E}_{f} \left[ V(X^{\textrm{obs}} \cup \{\mathbf{x}\}) - V(X^{\textrm{obs}}) \right].
\end{equation}
This acquisition function is hyperparameter-free, and, as we will see, is highly effective for sparse BO. In the experiments, we standardize the objectives when calculating the hypervolume. It is also possible to weight objectives differently to encourage greater exploration of sparse or high-performing solutions.
We refer to the resulting method as SEBO, and explore its performance in combination with the $L_0$ sparse regularization, described next. 
The SEBO-$L_0$ algorithm is shown in Appendix~\ref{subsec:supp-sebo}.  

\section{ACQUISITION FUNCTIONS WITH \texorpdfstring{$L_0$}{L0} SPARSITY}
\label{sec:l0_sparsity}
Our primary focus is $L_0$ sparsity, which comes with the challenge that the $L_0$ quasi-norm is discontinuous, making the resulting acquisition function challenging to optimize.
We will follow the idea of homotopy continuation, which has been successfully applied to, for instance, solving non-linear systems of equations and numerical bifurcation analysis~\citep{allgower2012numerical}.

The main idea is to define a homotopy $H(\mathbf{x}, a)$, where $H(\mathbf{x}, a_{\text{start}})$ corresponds to a problem that is easy to solve and $H(\mathbf{x}, a_{\text{end}})$ corresponds to the target problem.
In particular, for $a > 0$ we define $H(\mathbf{x}, a) = \mathbb{E}_f[u([f(\mathbf{x}), \varphi_a(\mathbf{x})])]$ where $\varphi_a(\mathbf{x}) := D - \sum_{i=1}^D  \exp\left(-0.5\,(\mathbf{x}_i/a)^2\right) \approx \|\mathbf{x}\|_0$ and $\mathbf{x} \in \mathbb{R}^D$. 
Under the assumption that the utility function $u(x)$ defined in Sec.~\ref{sec:background} is continuous, we have $\lim_{a \to 0^+} H(x, a) = \mathbb{E}_f[u([f(x), \|\mathbf{x}\|_0])]$, which corresponds to the original acquisition function with the $L_0$ quasi-norm.

While it may be tempting to set $a$ to a small value, e.g., $a=10^{-3}$, and optimize the acquisition function directly, this will not work well as the gradient of the homotopy is (numerically) zero almost everywhere in the domain.
On the other hand, setting $a$ to a large value, e.g., $a=1$ will make it much easier to optimize the acquisition function, but also result in a poor approximation of the true acquisition function that will likely not yield sparse solutions.
In order to optimize the acquisition function, we will start at some value $a_{\text{start}}$ large enough to make the acquisition function easy to optimize and slowly decrease $a$ towards $a_{\text{end}} = 0$.
Each time we change $a$ we re-optimize the acquisition function starting from the best solution found for the previous value of $a$.\footnote{This may appear similar to the idea of learning rate annealing. However, rather than decreasing a hyperparameter of the optimizer, we solve a sequence of optimization problems that approaches the true problem.}
This idea is illustrated in Fig.~\ref{fig:homotopy} where we plot snapshots of $H(\mathbf{x}, a)$ for a few values of $a$ as well as show the resulting continuous homotopy path.

\vspace{1em}
\begin{figure}[!ht]
    \centering
    \includegraphics[width=0.4\textwidth]{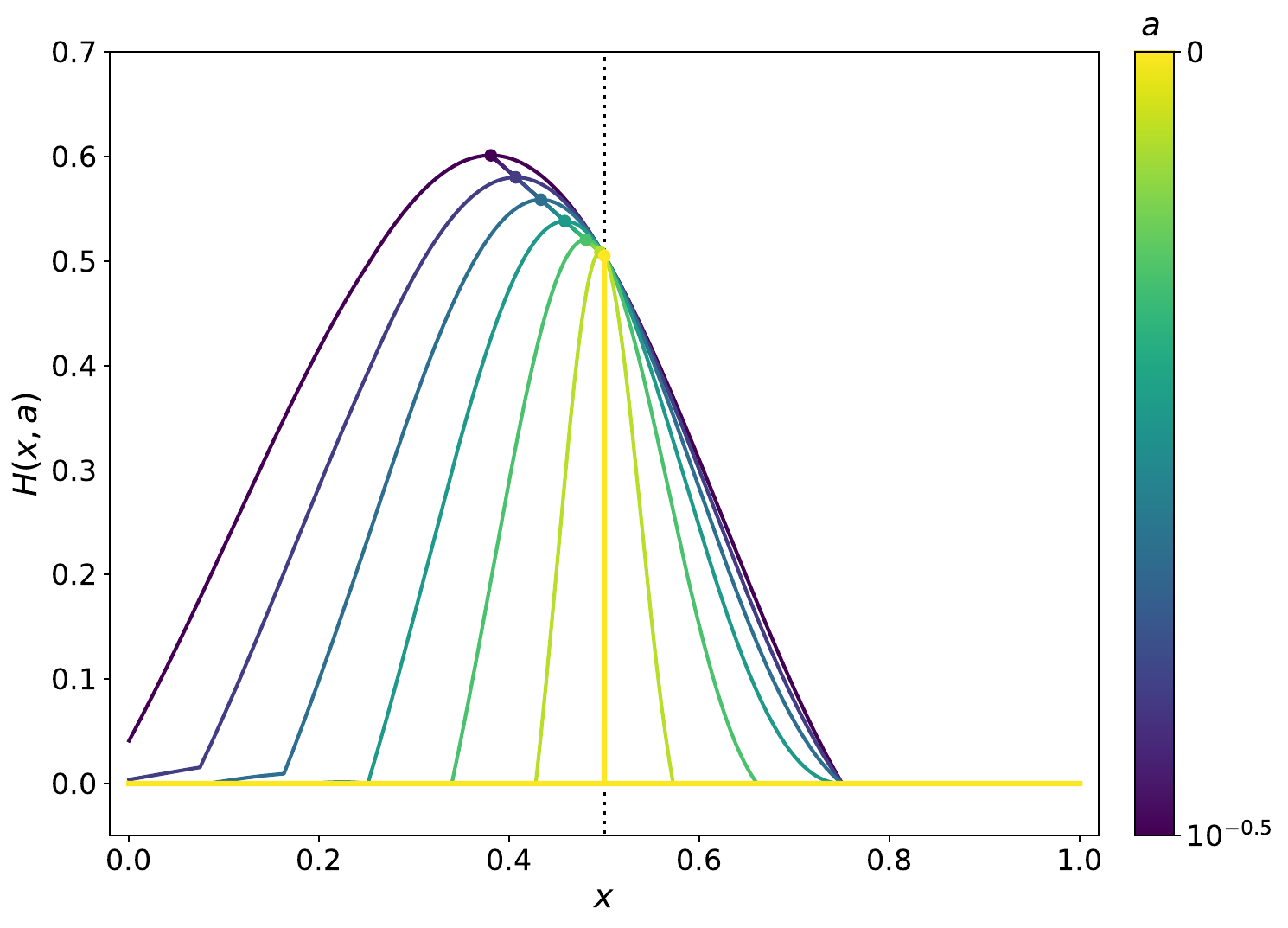}
    \caption{
        Consider the 1D problem of using SEBO to optimize $f(x)=-x^2$ with an $L_0$ penalty $\xi(x) = \|x - 0.5\|_0$. 
        Assume $X^{\textrm{obs}} = \{0, 0.25, 0.75, 1.0\}$ have already been evaluated and we want to optimize SEBO to generate the next candidate. 
        The global optimum of the acquisition function is given by the sparse point $x=0.5$. 
        We show that optimizing the acquisition function along the continuous homotopy path starting at $a_{\text{start}}=10^{-0.5}$ allows us to eventually uncover find the true optimum of $x=0.5$. 
    }
    \label{fig:homotopy}
\end{figure}

\section{EXPERIMENTS}
\label{sec:experiments}
We evaluate EI-IR, EI-ER and SEBO on two synthetic and three real-world problems with a focus on high-dimensional problems. Note that SEBO can be used for low-dimensional problems as well. Additional details are included in Appendix \ref{appendix:exp}. SEBO also naturally extends to multi-objective BO problems, and our code release supports that. We focus on single-objective problems to visualize and understand 2D Pareto frontiers, which are  difficult to visualize in higher dimensions. We show the results using $L_0$ regularization for most problems except for the last problem, where the group lasso is used to demonstrate that the methods can be applied to recover different forms of sparsity, such as group sparsity. 
In addition, we provide an ablation study that demonstrates the importance of using $L_0$ regularization by comparing it to $L_1$ regularization. We show in an ablation study that the homotopy continuation approach from Section~\ref{sec:l0_sparsity} is crucial for effective $L_0$ regularization.

\paragraph{Experimental setup:}
Our experiments all have high-dimensional parameter spaces, so we use the SAAS model when optimizing with ER, IR, and SEBO.
We compare performance to quasi-random search (Sobol), BO with a standard ARD Mat\'{e}rn-$5/2$ kernel and the EI acquisition function (GPEI), and SAASBO.
For the SAAS model, we use the same hyperparameters as suggested by~\citet{saasbo} and use the No-U-Turn (NUTS) sampler for model inference. 
The acquisition function is computed by averaging over the MCMC samples.
We always scale the domain to be the unit hypercube $[0, 1]^D$ and standardize the objective to have mean $0$ and variance $1$ before fitting the GP model. 

For the homotopy continuation approach described in Sec.~\ref{sec:l0_sparsity}, we discretize the range of $a$ to use $30$ values starting from $a_{\text{start}} = 10^{-0.5}$, see Appendix \ref{appendix:exp} for more details.
Fig.~\ref{fig:ablation_ell_starting} shows that SEBO is not sensitive to the choice of $a_{\text{start}}$.
We use a deterministic model for sparsity when using it as an objective.
The figures show the mean results across replications ($10$ replications for the adaptive bitrate simulation (ABR) problem and $20$ for all other experiments), and the error bars correspond to $2$ standard errors.
All experiments were run on a Tesla V100 SXM2 GPU (16GB RAM).
Code for replicating the methods and benchmark experiments in this work is available at \url{https://github.com/facebookresearch/SparseBO}.

\paragraph{Evaluation plots:}\label{plots}
We evaluate optimization performance in terms of the trade-off between the objective and sparsity. To compare the trade-offs, we show the resulting Pareto frontier by treating sparsity as a separate objective, e.g., Fig.~\ref{fig:synthetic} (Right) and Fig.~\ref{fig:real_world}. In particular, for each level of sparsity (active dimensions), we plot the best value found using \emph{at most} that number of non-sparse components.
We also show hypervolume traces in the Appendix~\ref{subsec: Supp-hv}. 
In cases where a method is unable to find at least one configuration for a given level of sparsity we assign replications an imputed function value corresponding to the worst label shown on the y-axis. 
For the synthetic problems where the true active dimensions and optima are known, we plot simple regret for a fixed level of sparsity, e.g., in Fig.~\ref{fig:synthetic} (Left, Middle).

\begin{figure*}[!ht]
    \centering
    \includegraphics[width=0.975\textwidth]{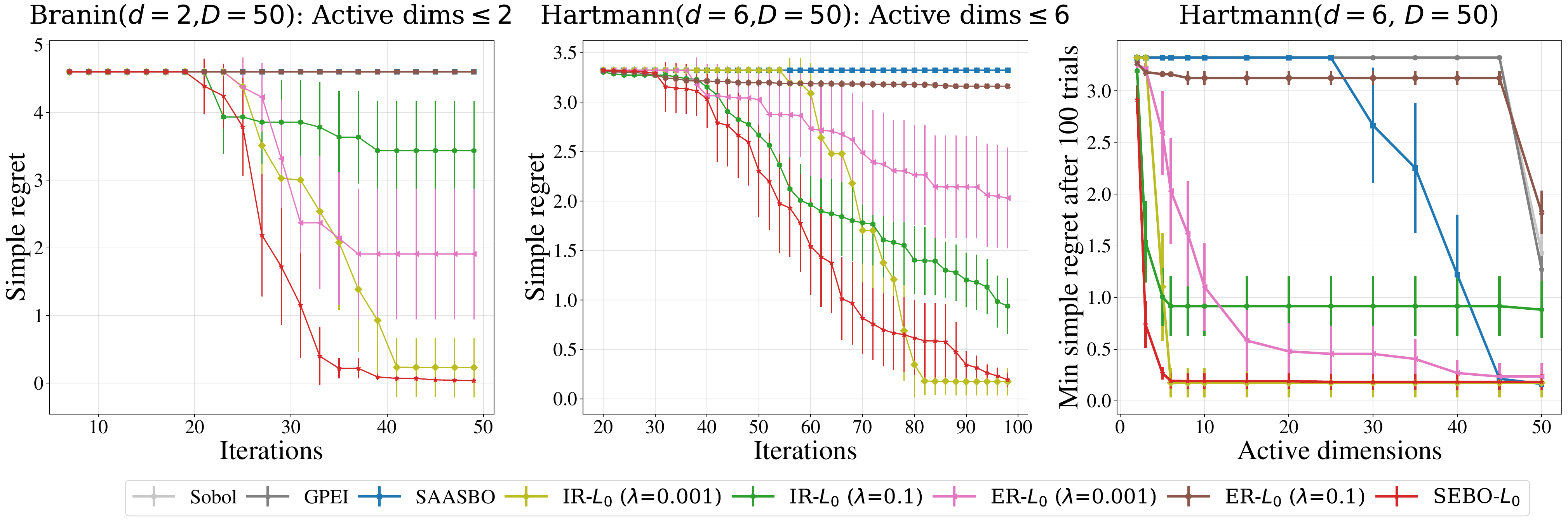}
    \caption{
        (Left) Simple regret for Branin embedded into a $50$D space, considering only observations with at most $2$ active (non-sparse) parameters. 
        SEBO-$L_0$ performed the best followed by IR with $\lambda=0.001$. 
        (Middle) SEBO-$L_0$ and IR with $\lambda=0.001$ performed the best for the Hartmann6 function embedded into a $50$D space when considering only observations with at most $6$ active parameters.
        (Right) The objective-sparsity trade-off after all $100$ iterations on the Hartmann6 problem. 
        Shown is the \emph{Pareto frontier} between sparsity and simple regret after the evaluation budget has been exhausted.
        SEBO-$L_0$ is able to explore the trade-offs and is able to discover sparse configurations with fewer than $6$ active parameters that are not found by the other methods.
    }
    \label{fig:synthetic}
\end{figure*}
\vspace{-1em}

\paragraph{Synthetic functions:}\label{sec:synthetic}
We first consider two synthetic problems where the level of sparsity is known. 
We use the Branin and Hartmann6 functions embedded into a $50$D space where $0$ is considered sparse, i.e. $\boldx^s$ = $\bold0$. We used $50$ trials (evaluations) with $8$ quasi-random initial points for Branin and $100$ trials with $20$ quasi-random initial points for Hartmann6.
The results are shown in Fig.~\ref{fig:synthetic}. 
The two leftmost plots show the optimization results by evaluating the objective only on observed points whose number of active (i.e., non-zero) parameters was less than or equal to the true effective dimension ($2$ for Branin and $6$ for Hartmann6). 

We observe that SEBO-$L_0$ performed the best, followed by IR with $\lambda=0.001$. 
This suggests IR may perform competitively if the regularization coefficient is chosen optimally. 
On the other hand, ER performed worse than SEBO and IR. Finally, methods with non-regularized acquisition functions (Sobol, GPEI, and SAASBO) failed to identify sparse configurations since they do not explicitly optimize for sparsity of the solutions. 
Fig.~\ref{fig:synthetic} (Right) visualizes the trade-off between the objective and sparsity and SEBO-$L_0$ yielded the best sparsity trade-offs. 

\paragraph{Ranking sourcing system simulation:}\label{sec:sourcing}
The sourcing component of a recommendation system is responsible for retrieving a collection of items that are sent to the ranking algorithm for scoring. Items are retrieved from multiple sources, for instance that may represent different aspects of the user interest taxonomy~\citep{wilhelm2018dpp}.
Querying for more items can potentially improve the quality of the recommendation system, but comes at the cost of increasing the infrastructure load. 
In addition, each source may require individual maintenance; thus, deprecating poor sources could reduce technical debt and maintenance costs of an entire recommendation system~\citep{techdebt}. 
Our goal is thus to identify a retrieval policy that uses a minimal number of sources while still maximizing the ranking quality score, measured by a function of content relevance and infrastructure load.

We developed a simulation of a recommender sourcing system that simulates the quality and infrastructure load of recommendations produced by a particular sourcing policy. 
The sourcing system is modeled as a topic model, where each source has a different distribution over topics, and topics have different levels of relevance to the user. 
When two sources are (topically) similar, they may obtain duplicate items, which will not improve recommendation quality. 

We consider a $25$D retrieval policy in which each parameter specifies the number of items retrieved from a particular source. 
Our desired sparsity is to set parameters to $0$ ($\boldx^s$ = $\bold0$), i.e., turning off the source. 
See Sec.~\ref{sec:sourcing_setup} for more details.  
We used $8$ initial points and ran $100$ trials for all the methods. 
Fig.~\ref{fig:real_world} (Left) shows that SEBO-$L_0$ performed the best in optimizing the ranking quality score under different sparsity levels. 
Sobol and GPEI could not find sparse policies and obtained worse quality scores even with $25$ active parameters. 
IR and SAASBO performed similarly, and ER with the larger regularization parameter $\lambda=0.01$ achieved higher quality score with less than $10$ active dimensions. 

\begin{figure*}[!ht]
    \centering
    \includegraphics[width=0.95\textwidth]{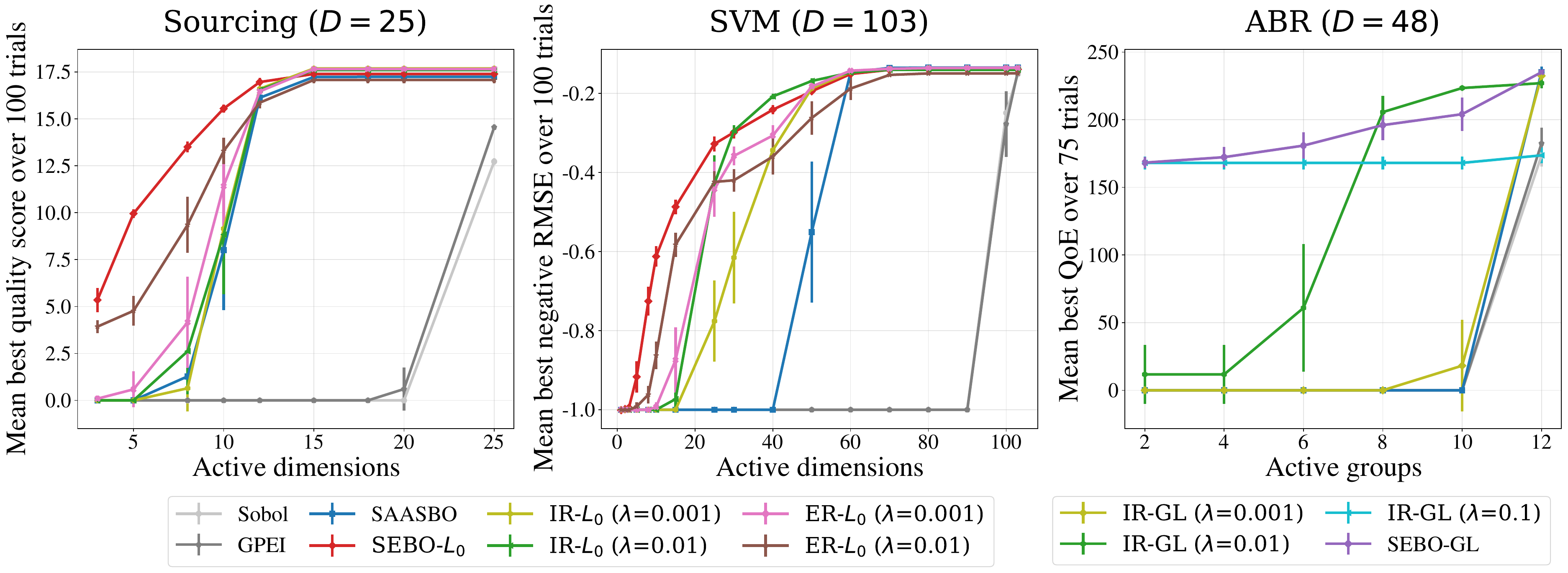}
    \caption{
        Objective-sparsity trade-offs after $100$ ($75$ for ABR) trials for the three real-world problems.
        (Left) \textit{Sourcing problem}: SEBO-$L_0$ regularization effectively explored all sparsity trade-offs. 
        (Middle) \textit{SVM problem}: ER with $\lambda=0.01$ and IR with $\lambda=0.01$ were able to explore parts of the Pareto frontier, however were dominated by SEBO-$L_0$. 
        (Right) \textit{ABR problem}: Similar behavior as in the SVM problem was seen here with a group lasso penalty.
    }
    \label{fig:real_world}
\end{figure*}
\vspace{-1em}

\paragraph{SVM Machine learning hyperparameter tuning:}
We consider the problem of doing joint feature selection and hyperparameter tuning for a support vector machine (SVM). 
We tuned the $C$, $\epsilon$, and $\gamma$ hyperparameters of the SVM, jointly with separate scale factors in the continuous range $[0, 1]$ for each feature.
We used $100$ features from the CT slice UCI dataset~\cite{uci} and the goal was to minimize the RMSE on the test set.
This produces a $103$D optimization problem where we shrink each feature towards a scale factor of $0$, i.e. $\boldx^s_i = 0$, as it effectively removes the feature from the dataset.
We took $C \in [0.01, 1.0]$, $\epsilon \in [0.01, 1.0]$, and $\gamma \in [0.001, 0.1]$, where the center of each interval was considered sparse as this is the default value in Sklearn {(i.e. $\boldx^s_i = \text{Mid}(\text{Hyperparameter Interval})$)}.
We optimized $C$, $\epsilon$, and $\gamma$ on a log-scale, and initialized all methods with $20$ points and ran $100$ evaluations. Fig.~\ref{fig:real_world} (Middle) shows that SEBO-$L_0$ was best able to explore the trade-offs between sparsity and (negative) RMSE.

\begin{figure*}[ht]
        \centering
        \includegraphics[width=0.925\textwidth]{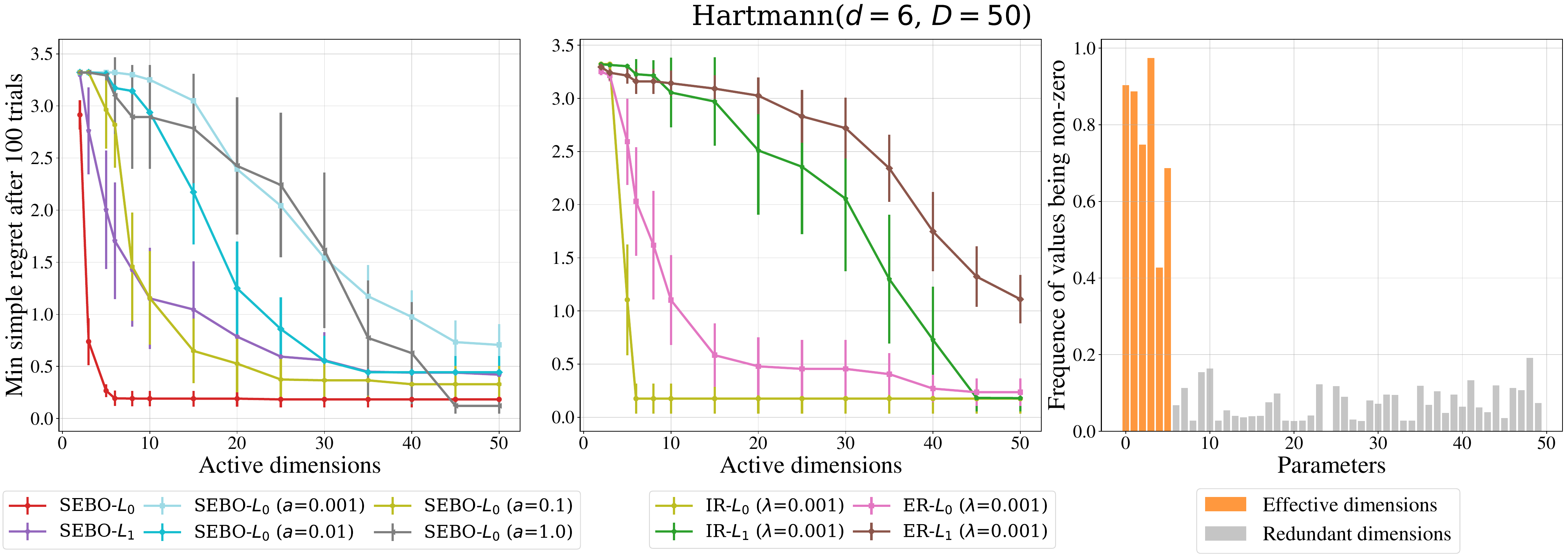}
    \caption{
        Ablation study on the Hartmann6 function embedded in a $50$D space. 
        (Left) SEBO-$L_0$ works much better than SEBO-$L_1$ as it directly targets $L_0$ sparsity. 
        Using a fixed value of $a$ performs poorly, confirming the importance of our homotopy continuation approach.
        (Middle) Working directly with $L_0$ regularization works drastically better than $L_1$ regularization for both IR and ER.
        (Right) The $6$ important parameters are more frequently included in Pareto optimal configurations for the embedded Hartmann6 problem.
    }
    \label{fig:ablation}
\end{figure*}
\vspace{-1em}

\paragraph{Adaptive bitrate simulation:}\label{sec:abr}
Video streaming and real-time conferencing systems use adaptive bitrate (ABR) algorithms to balance video quality and uninterrupted playback. 
The goal is to maximize the quality of experience (QoE). 
The optimal policy for a particular ABR controller may depend on the network, for instance a stream with large fluctuations in bandwidth will benefit from different ABR parameters than a stream with stable bandwidth.
This motivates the use of a contextual policy where ABR parameters are personalized by context variables such as country or network type~\citep{feng2020cbo}.
Various other systems and infrastructure applications commonly rely on tunable parameters which can benefit from contextualization. 

We suppose that the system has already been optimized with a global non-contextual policy, ${\pi}_\text{global}$, that is used for all contexts. Our goal here is to use sparse BO to find the contextualized residuals ${\Delta\pi}_i$ for each individual context $i$, i.e., $\pi_i = \pi_\text{global} + {\Delta\pi}_i$, with the target sparse point $\boldx^s$ set to be ${\pi}_\text{global}$.
By regularizing the contextualized residuals ${\Delta\pi}_i$'s using the group lasso (GL) norm~\citep{grouplasso}, we hope to find policies that require minimum alteration to the global policy ${\pi}_\text{global}$, in which the minimum number of contexts have parameters that deviate from the global optimum. 
This adds both simplicity and interpretability to the contextual policy, since we can interpret the policy by looking at the contextual residuals ${\Delta\pi}_i$. 

Fig.~\ref{fig:real_world} (Right) shows the results of applying our methods to the contextual ABR optimization problem from~\citet{feng2020cbo}. 
For this problem, we have $12$ contexts and $4$ parameters for each context resulting in a $48$D optimization problem. 
We used $75$ trials with $8$ quasi-random initial points for all the methods. 
The group lasso penalty is defined by assigning parameters for each individual context to be within the same group. 
We observe that IR with a fixed $\lambda$ was able to explore trade-offs at certain sparsity levels and that stronger regularization (larger $\lambda$) resulted in finding configurations that were more sparse.
SEBO-GL, on the other hand, automatically and efficiently explored the trade-off between sparsity and reward at all sparsity levels. 
All other baselines (Sobol, GPEI, SAASBO) failed to find any sparse configurations that achieve non-zero reward.

\paragraph{Ablation study and Interpretation:}
We show by means of an ablation study the importance of using the homotopy continuation approach from Section~\ref{sec:l0_sparsity} to target $L_0$ sparsity.
We focus on SEBO as it consistently outperformed IR and ER, and refer to Fig.~\ref{fig:compare_gp} in Appendix \ref{sec: ablation saas} for additional results on the importance of using the SAAS model.
The results from the ablation study can be seen in Fig.~\ref{fig:ablation}.
Using a fixed value of $a$ for the $L_0$ approximation performs poorly, particularly when $a$ is small, which is due to the acquisition function being zero almost everywhere and thus difficult to optimize.
On the other hand, $a=1$ results in a failure to discover sparse configurations and the resulting method performs similar to SAASBO (see Fig.~\ref{fig:synthetic}).
In addition, we show that for all approaches (ER, IR, and SEBO), working directly with $L_0$ regularization works significantly better than the frequently used $L_1$ regularization. 
Finally, we show in Fig.~\ref{fig:ablation} (Right) how frequently each parameter is turned on (non-zero) in the final Pareto frontier for each replication of SEBO-$L_0$, which indicates the method correctly identifies the important parameters. See Appendix~\ref{sec: sparse_sol} for more interpretations of SEBO-$L_0$ configurations in other benchmarks.

\section{DISCUSSION}
\label{sec:discussion}
BO is a powerful tool for sample-efficient optimization of real-world systems. 
Recent developments in BO have made it possible to optimize hundreds of parameters, providing solutions to complex optimization problems in science and engineering.  
Yet practitioners and decision-makers often favor simplicity in the solutions, e.g., in the design space, for the sake of interpretability, managing risk, or for reducing technical debt.
This poses a new challenge: how should we discover well-performing \textit{and} parsimonious designs in a sample-efficient manner? 

We show that sparsity-inducing models are not sufficient for producing sparse designs, and examine several schemes for penalizing design parameters within the acquisition function itself.  
We utilize theoretical insights from multi-objective optimization to identify limitations of common penalization approaches and propose SEBO, which optimizes for both sparsity and performance. 
In doing so, we are able to learn the entire set of optimal trade-offs between objective and sparsity, allowing decision makers to select the amount of objective they are willing to sacrifice for increased interpretability and simplicity.

Our formulation is compatible with a variety of regularizers, including $L_0$, $L_1$, and the group lasso penalties.
To enable the optimization of the discontinuous $L_0$ penalty, we develop a novel acquisition function optimization method based on homotopy continuation that enables gradient-based optimization.
We find that SEBO with $L_0$ penalization consistently outperforms all other methods in identifying optimal designs, while also eliminating the need to tune regularization hyperparameters.

Our work has a few limitations that suggest areas for future work. 
First, SEBO can be useful for identifying the entire Pareto frontier of sparse solutions, but in some contexts decision-makers may have a desired sparsity level in mind. 
Further work is required to develop adaptive algorithms that can efficiently target specific sparsity levels. 
Second, if the goal is to reduce regret while achieving sparsity, there may be opportunities for theoretical work on selecting model and acquisition function regularization parameters simultaneously, see, e.g.,~\citet{bastani2020lassobandit}.

\bibliographystyle{abbrvnat}
\bibliography{mybib}

\clearpage
\onecolumn

\hsize\textwidth\linewidth\hsize\toptitlebar
{\centering{\Large\bfseries Sparse Bayesian optimization: Supplementary Material \par}}
\bottomtitlebar

\appendix

\section{POTENTIAL SOCIETAL IMPACTS}
\label{appdx:society}

BO is often used to optimize complicated black-box functions such as training deep neural networks, tuning recommendation systems, designing molecules, or synthesizing compounds in chemistry.
Our method enables finding sparse solutions while optimizing the objective of interest.
In many situations, a sparse solution can help reduce tech debt as well as making it easier to interpret.
Our SEBO method is able to automatically explore the trade-offs between the objective(s) and sparsity which will allow the decision-maker to choose a solution of their liking.
Lastly, as the black-box functions are often expensive to evaluate, the sample-efficiency of our method may reduce the environmental impact compared to using a less sample-efficient method.

\section{CODE IMPLEMENTATIONS}
\label{appdx:licenses}

The GPEI, SAASBO and EHVI used in SEBO were implemented using BoTorch, a framework for BO in PyTorch~\cite{botorch} and are available in Ax \url{https://github.com/facebook/Ax}. The code is licensed under the MIT License.
The SVM hyperparameter tuning experiment uses the SVM implementation in Sklearn and the CT slice dataset in the UCI machine learning repository~\cite{uci}.
The Adaptive bitrate simulation experiment is available at \url{https://github.com/facebookresearch/ContextualBO}, licensed under the MIT License.

\section{THEORETICAL RESULTS}
\label{sec:proofs}
Here we provide the proofs of Propositions \ref{thm:ext} and \ref{thm:int}, as well as an illustration of the result of Proposition \ref{thm:int}.

\begin{proof}[Proof of Proposition \ref{thm:ext}]
Suppose $\mathbf{x}^{\dagger} \in \arg \max \alpha_\textrm{ER}(\mathbf{x} ; \lambda)$ and $\xi(\mathbf{x}^{\dagger}) \leq \theta$. Then, $\alpha(\mathbf{x}^{\dagger}) = 0$, so $\alpha_\textrm{ER}(\mathbf{x}^{\dagger}; \lambda) = -\lambda \xi(\mathbf{x}^{\dagger})$.

By $\mathbf{x}^{\dagger}$ being a maximizer of $\alpha_\textrm{ER}$ we must have
\begin{equation*}
    -\lambda \xi(\mathbf{x}^{\dagger}) = \alpha_\textrm{ER}(\mathbf{x}^{\dagger} ; \lambda) \geq \alpha_\textrm{ER}(\mathbf{x}^s ; \lambda) = -\lambda \xi(\mathbf{x}^s).
\end{equation*}
Thus $\xi(\mathbf{x}^{\dagger}) \leq \xi(\mathbf{x}^s)$. Because $\mathbf{x}^s$ is a strict global minimum, we have then that $\mathbf{x}^{\dagger} = \mathbf{x}^s$.
\end{proof}
This setting where the acquisition value is $0$ for all sparse points is easily encountered in practice when there is a trade-off between the objective function and sparsity, as in Fig. \ref{fig:sourcing_intro}, and we have sampled a point close to the (non-sparse) optimum. Consider the EI acquisition function with external regularization. Once the GP is confident that sparse points have worse objective value than non-sparse points, sparse points will have acquisition value approximately $0$, as their improvement is being evaluated with respect to a non-sparse incumbent best $\mathbf{x}^*$.

We assume $\xi$ is continuous and bounded, which implies $h$ is continuous and bounded:
\begin{assumption}\label{ass:cont}
    $\xi$ is continuous on $\mathcal{B}$, and has minimum value $\xi(\mathbf{x}^s) = s_l$ and maximum value $s_u$.
\end{assumption}
\begin{proposition}
    $h$ is continuous and bounded on the domain $[s_l, s_u]$.
\end{proposition}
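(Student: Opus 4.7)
The plan is to apply the Berge maximum theorem to the parametric problem $h(\theta) = \max_{\mathbf{x}} f(\mathbf{x})$ subject to $\mathbf{x} \in \Gamma(\theta) := \{\mathbf{x} \in \mathcal{B} : \xi(\mathbf{x}) = \theta\}$, verifying its hypotheses from Assumption~\ref{ass:cont} together with continuity of $f$ on the compact cube $\mathcal{B} = [0,1]^D$.

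First I would handle boundedness and well-posedness. Since $f$ is continuous on the compact $\mathcal{B}$, it attains its extrema, so $\min_{\mathcal{B}} f \leq h(\theta) \leq \max_{\mathcal{B}} f$ whenever $\Gamma(\theta)$ is non-empty. Continuity of $\xi$ on the connected compact set $\mathcal{B}$ forces $\xi(\mathcal{B})$ to be a closed interval containing both $s_l$ and $s_u$, hence equal to $[s_l, s_u]$; so $\Gamma(\theta)$ is non-empty for every $\theta$ in the stated domain, and it is closed in $\mathcal{B}$ as the preimage of $\{\theta\}$ under a continuous map, therefore compact. Weierstrass then ensures the max defining $h(\theta)$ is attained, and $h$ is bounded.

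Upper semicontinuity I would prove by a standard subsequence argument. For $\theta_n \to \theta$, pick maximizers $\mathbf{x}_n \in \Gamma(\theta_n)$ with $f(\mathbf{x}_n) = h(\theta_n)$. Compactness of $\mathcal{B}$ yields a subsequence $\mathbf{x}_{n_k} \to \mathbf{x}^\star$; continuity of $\xi$ forces $\xi(\mathbf{x}^\star) = \theta$, so $\mathbf{x}^\star \in \Gamma(\theta)$, and continuity of $f$ gives $\lim_k h(\theta_{n_k}) = f(\mathbf{x}^\star) \leq h(\theta)$. Running this on every subsequence yields $\limsup_n h(\theta_n) \leq h(\theta)$.

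Lower semicontinuity is the main obstacle. In Berge's language it is lower hemicontinuity of the equality correspondence $\Gamma$: given $\mathbf{x}^\star \in \Gamma(\theta)$ attaining $h(\theta)$ and $\theta_n \to \theta$, I need $\mathbf{x}_n \in \Gamma(\theta_n)$ with $\mathbf{x}_n \to \mathbf{x}^\star$, after which continuity of $f$ gives $h(\theta_n) \geq f(\mathbf{x}_n) \to f(\mathbf{x}^\star) = h(\theta)$. My attempt would exploit path-connectedness of $\mathcal{B}$: pick reference points $\mathbf{y}^\pm$ with $\xi(\mathbf{y}^-) < \theta < \xi(\mathbf{y}^+)$ (available for interior $\theta$), form segments $\gamma^\pm(t) = (1-t)\mathbf{x}^\star + t \mathbf{y}^\pm$, and use the intermediate value theorem on $\xi \circ \gamma^\pm$ to locate parameters $t_n$ at which the level $\theta_n$ is achieved. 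The genuine difficulty is that $t_n$ need not tend to $0$ at a critical point of $\xi$, and simple counterexamples (e.g.\ $\xi$ with a plateau at $\mathbf{x}^\star$, or a strict interior extremum of $\xi$ coinciding with the maximizer) show the conclusion can fail from Assumption~\ref{ass:cont} alone. I would therefore either impose a mild regularity condition on $\xi$ (e.g., no strict interior extremum at $\mathbf{x}^\star$), or weaken the proposition to upper semicontinuity, which already suffices for the downstream use in Theorem~\ref{thm:int} since strict convexity on an interval implies continuity on its interior.
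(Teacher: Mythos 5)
Your treatment is considerably more careful than the paper's own argument, which is only a two-line sketch: it invokes continuity and boundedness of $f$ for boundedness of $h$, and the intermediate value theorem applied to $\xi$ on the connected compact set $\mathcal{B}$ to guarantee that every level set $\{\mathbf{x} : \xi(\mathbf{x}) = \theta\}$ with $\theta \in [s_l, s_u]$ is non-empty. That is exactly your well-posedness paragraph, and your compactness/subsequence argument for upper semicontinuity is correct and already goes beyond what the paper writes down. More importantly, your diagnosis of the lower-semicontinuity obstruction is genuine: under Assumption \ref{ass:cont} alone the proposition as stated is false. A one-dimensional counterexample: on $\mathcal{B} = [0,1]$ take $\xi$ piecewise linear with $\xi(0)=0$, $\xi(1/3)=1/2$, $\xi(2/3)=1/4$, $\xi(1)=1$, and let $f$ be a tall narrow bump centered at $x = 1/3$. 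Then $h(1/2)$ equals the bump height, but for small $\epsilon>0$ the level set $\{\xi = 1/2+\epsilon\}$ contains no point near $1/3$, so $h(1/2+\epsilon)$ stays small and $h$ jumps down at $\theta = 1/2$; the strict interior local maximum of $\xi$ coinciding with the maximizer of $f$ is precisely the failure mode you predicted.

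So the honest conclusion is the one you reach: the stated hypotheses give boundedness, attainment of the maximum, and upper semicontinuity, but not continuity, and your two proposed repairs are both adequate for how the proposition is actually used. The proof of Theorem \ref{thm:int} needs only that the maxima defining $h(\theta_l)$, $h(\theta_u)$, and $h(\theta_\dagger)$ are attained; continuity of $h$ is never invoked there, and strict convexity on an interval (a hypothesis of that theorem) already yields continuity on the interval's interior for free. Moreover, for the concrete penalty in the paper's illustration, $\xi(\mathbf{x}) = \|\mathbf{x}\|_1$, which is affine on $[0,1]^D$, the level-set correspondence is continuous and Berge's maximum theorem does deliver full continuity, so the proposition holds in the cases the paper actually exercises even though its general form requires either your extra regularity condition on $\xi$ or the weakening to upper semicontinuity.
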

\begin{proof}[Sketch of Proof]
    This result falls from the continuity and boundedness of $f$, and by applying the intermediate value theorem to $\xi$.
\end{proof}

\begin{proof}[Proof of Proposition \ref{thm:int}]
Suppose $h$ is strictly convex over the interval $[\theta_l, \theta_u]$. For the sake of contradiction, assume that there exists a $\theta_\dagger \in (\theta_l, \theta_u)$ and an $\mathbf{x}^\dagger$ such that $\mathbf{x}^\dagger \in \arg \max g(\mathbf{x}; \lambda)$ and $\xi(\mathbf{x}^\dagger) = \theta_\dagger$.

It is clear that $\mathbf{x}^\dagger \in \arg \max f(\mathbf{x}) \textrm{ subject to } \xi(\mathbf{x}) = \theta_\dagger$, otherwise the point with strictly larger $f$ and equal $\xi$ value would have a higher value for $g$, and  $\mathbf{x}^\dagger$ could not be optimal for $g$. Thus, $f(\mathbf{x}^\dagger) = h(\theta_\dagger)$.

We can express $\theta_\dagger = t \theta_l + (1 -t) \theta_u$ for some $t \in (0, 1)$. By strict convexity of $h$ on this interval, we have that
\begin{equation}\label{eq:convex}
    h(\theta_\dagger) < t h(\theta_l) + (1-t) h(\theta_u).
\end{equation}
Take $\mathbf{x}^u \in \arg \max f(\mathbf{x}) \textrm{ subject to } \xi(\mathbf{x}) = \theta_u$, and $\mathbf{x}^l \in \arg \max f(\mathbf{x}) \textrm{ subject to } \xi(\mathbf{x}) = \theta_l$. These are the points in $\mathcal{B}$ corresponding to $h(\theta_l)$ and $h(\theta_u)$. The optimality of $\mathbf{x}^\dagger$ implies that $g(\mathbf{x}^\dagger ; \lambda) \geq g(\mathbf{x}^u ; \lambda)$ and $g(\mathbf{x}^\dagger ; \lambda) \geq g(\mathbf{x}^l ; \lambda)$. Thus,
\begin{align}\nonumber
    g(\mathbf{x}^\dagger ; \lambda) &\geq t g(\mathbf{x}^l ; \lambda) + (1-t) g(\mathbf{x}^u ; \lambda)\\\nonumber
    h(\theta_\dagger) - \lambda \theta_\dagger &\geq t h(\theta_l) - t \lambda \theta_l + (1-t) h(\theta_u) - (1-t) \lambda \theta_u\\\label{eq:endproof}
    h(\theta_\dagger) &\geq t h(\theta_l) + (1-t) h(\theta_u),
\end{align}
using $\theta_\dagger = t \theta_l + (1 -t) \theta_u$. The result in (\ref{eq:endproof}) contradicts the convexity in (\ref{eq:convex}), and so $\mathbf{x}^\dagger$ cannot be optimal for $g$.
\end{proof}

Fig.~\ref{fig:ir_res_plot} shows an illustration of the result of Proposition \ref{thm:int} on a log-transformed version of the classic Branin problem, where $f(x_1, x_2) = -\log(10 + \textrm{Branin}(x_1, x_2))$, and we are using a traditional $L_1$ regularization penalty, $\xi(x_1, x_2) = |x_1| + |x_2|$.
The right panel shows $h(\theta)$, from (\ref{eq:h}), as it traces the trade-off from the minimum of $\xi$ to the maximum of $f$.
There is a wide interval of $L_1$-norm values in the middle, $0.4$ to $2.7$, where $h(\theta)$ is strictly convex.
By Proposition \ref{thm:int}, there is no value of $\lambda$ under which the maximizer of (\ref{eq:linscal}) has $L_1$ norm in that range.
That range of sparsity levels thus cannot be reached by maximizing the regularized function $g$.

\begin{figure}
\centering
    \includegraphics{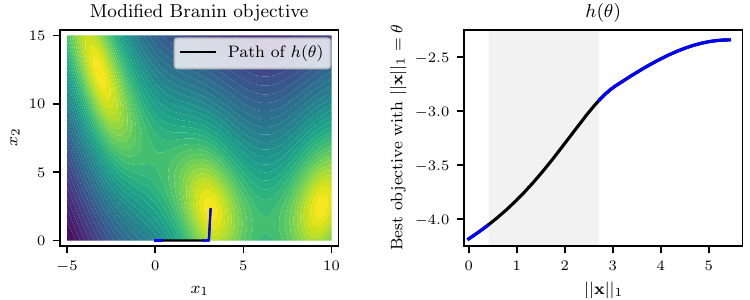}
    \caption{
        An illustration of the internal regularization result in Proposition \ref{thm:int}.
        (Left) The objective $f$ is a modified Branin function.
        The sparsity penalty $\xi$ is the $L_1$ norm.
        (Right) The optimal objective vs. sparsity trade-off, $h(\theta)$, shows the best-achievable objective value for any specified value of $L_1$ norm.
        The shaded region is an interval where $h$ is strictly convex.
        By Proposition \ref{thm:int}, the regularized function in (\ref{eq:linscal}) has no maximizers with $L_1$ norm in that range, for any value of $\lambda$.
    }
    \label{fig:ir_res_plot}
\end{figure}

\section{RELATIONSHIP BETWEEN PAREGO AND INTERNAL REGULARIZATION}
\label{sec:SuppParEGO}
As described in Section~\ref{sec:background}, ParEGO applies the EI acquisition function to a random scalarization of multiple objectives. With internal regularization, random sampling of $\lambda$ for each acquisition optimization produces a ParEGO-style strategy for sparse BO, that differs only in the form of the scalarization.

The inability of linear scalarizations to capture the entire Pareto front, seen in Proposition \ref{thm:int}, is a well-known failure mode for MOO. This result has inspired a large number of alternative scalarizations~\citep{das97}. ParEGO avoids the issue by replacing the linear scalarization with an augmented Chebyshev scalarization \citep{bowman76}. When applied to the context of sparse regularization, this means maximizing
\begin{equation*}\label{eq:cheby}
    T(\mathbf{x}; \lambda) = C (f(\mathbf{x}) - \lambda \xi(\mathbf{x})) - \max(f^* - f(\mathbf{x}), \lambda (\xi(\mathbf{x}) - \xi(\mathbf{x}^s)),
\end{equation*}
where $f^*$ is an estimate for the maximum of $f$ and $C$ is a constant, usually set to $0.05$.
Unlike $g$ in (\ref{eq:linscal}), maximizers of $T$ span the entire objective vs. sparsity trade-off~\citep{parego}.
Using EI to optimize this regularized function with randomly sampled values of $\lambda$ is equivalent to applying ParEGO to the objective and the (negative) sparsity penalty.

\section{OPTIMIZATION WITH \texorpdfstring{$L_0$}{L0} SPARSITY}
\subsection{Homotopy continuation}
In this section we provide some additional details for the homotopy continuation described in Sec.~\ref{sec:l0_sparsity}.
For computational reasons, we use a sequence of $30$ $a$’s starting from $a_{\text{start}} = 10^{-0.5}$ and ending at $10^{-3}$ that is linearly spaced on a log-scale.
First, we optimize the acquisition function using L-BFGS-B from $20$ different starting points to obtain $20$ local optima of $H(x, a_{\text{start}})$.
We then increment the value of $a$ and use L-BFGS-B to re-optimize the homotopy starting from each of the previously found $20$ local optima.
This process is continued until we reach $a=0$ which is the acquisition function corresponding to the true $L_0$ norm.
Note that this procedure traces $20$ curves $c(a) \in \argmin_x H(x, a)$ from $a=a_{\text{start}}$ to $a=0$ and that this curve is of finite length under the assumption that the domain is compact.
These curves are potentially different as the acquisition function may be non-convex and have multiple local optima.
Finally, we choose the candidate as the point that achieves the best acquisition function value.

We use $a_{\text{start}} = 10^{-0.5}$ as it strikes a balance between being large enough to find initial points with non-zero acquisition function values, and being small enough to discover points that are almost sparse.
To better understand this choice note that $\max_{x, z \in [0, 1]} \left|\varphi'_{10^{-0.5}}(\mathbf{x - z})\right| \approx 0.067$ while, e.g., $\max_{x, z \in [0, 1]} \left|\varphi'_{0.1}(\mathbf{x - z})\right| \approx 2 \times 10^{-20}$ which shows that $0.1$ may be too small to serve as $a_{\text{start}}$.
We also investigate this choice in an ablation study in Appendix~\ref{apdx:a_ablation} and find that the performance of SEBO-$L_0$ is not sensitive to the choice of $a_{\text{start}}$ as long as the value is not too small.

\subsection{SEBO algorithm}
\label{subsec:supp-sebo}
The SEBO-$L_0$ method is described in Algorithm~\ref{algo: sebo}.
We start with an initial space-filling experiment design.
In each iteration step, we fit a SAAS GP model and optimize the acquisition function to find the next point to evaluate, as shown at line~\ref{algo:sebo-main}.
When optimizing the acquisition function, homotopy continuation is used to handle the discontinuous $L_0$ norm.
This part is shown on line~\ref{algo:homotopy}.
\begin{algorithm}
\caption{Sparsity Exploring Bayesian Optimization with $L_{0}$ norm (SEBO-$L_{0}$) }
\label{algo: sebo}
\begin{algorithmic}[1]
\Procedure{SEBO-$L_0$}{} \Comment{Outer loop of BO}
\label{algo:sebo-main}
\State Place a Gaussian Process prior on $f$
\State Observe $f$ at $n_0$ quasi-random initial points and get the initial dataset $\mathcal{D}_{n_0}$
\For{$n \gets {n_0 + 1}$ to $N$}
\State Update the posterior probability distribution on $f$ using observed data $\mathcal{D}_{n-1}$
\State Select the next point $\mathbf{x}_{n} \gets$ \Call{Optimize-Homotopy}{$\hat{f_n}$}
\State Evaluate $\mathbf{x}_{n}$: $\mathcal{D}_{n} \gets \{\mathcal{D}_{n-1}, (\mathbf{x}_{n}, f(\mathbf{x}_{n}))\}$
\EndFor
\State \textbf{return} The best point
\EndProcedure
\vspace{1em}
\Procedure{Optimize-Homotopy}{$\hat{f}$}  \Comment{Optimize SEBO-$L_0$ acquisition function}
\label{algo:homotopy}
    \State Define a homotopy $H(\mathbf{x}, a)$ using the posterior on $f$
    \State Initialize a candidate pool $\mathcal{X}_{a} \gets \{\}$
    \For{$a \gets a_{\mathrm{start}}$ to $a_{\mathrm{end}}$}  %\Comment{Change $a$ and re-optimized the acquisition function}
    \State $\mathbf{x_a} \gets$ maximize $H(\mathbf{x}, a)$ based on the best points in $\mathcal{X}_{a}$
    \State $\mathcal{X}_{a} \gets \{\mathcal{X}_{a}, \mathbf{x}_{a} \}$
    \EndFor
    \State \textbf{return} $\mathbf{x_a}$
\EndProcedure
\end{algorithmic}
\end{algorithm}

\section{ADDITIONAL EXPERIMENTAL STUDIES}\label{appendix:exp}
\subsection{Ranking sourcing system simulation}
\label{sec:sourcing_setup}
In the sourcing simulation experiment in Section~\ref{sec:experiments}, the recommender sourcing system has $25$ content sources and $1000$ possible pieces of content (i.e., \emph{items}) for retrieval.
We consider a $25$-dimensional retrieval policy $\mathbf{x}$ over the integer domain $[0, 50]^{25}$.
We take inspiration from the Latent Dirichlet Allocation (LDA) model~\citep{blei2003latent} in defining a generative probabilistic model of items recommended by each source.
We assume there are $8$ latent topics and that each item can be represented as a mixture over topics.
Each source contains a mixture over a set of topics, and particular items will be more likely to be recommended by topically related sources.
Such topical overlaps can create redundancy of recommendations across sources.
Retrieving more items from additional sources comes at a cost,
making sparse retrieval policies preferred.

\FloatBarrier
Before describing the simulation in pseudo-code, we need the following definitions:
\begin{itemize}
    \item $T$ is the number of latent topics.
    \item $K$ is the number of distinct items.
    \item $S$ is the number of content sources.
    \item $\theta_s \in \Delta^T$ is the topic distribution for source $s$, where $\Delta^T$ denotes the $T$-dimensional simplex. $\{\theta_s\}_{s=1}^S$ follow a Dirichlet distribution, i.e., $\theta_s \sim \textrm{Dir}(\alpha)$ where $\alpha = 0.2$.
    \item $\phi_i \in \Delta^K$ is the item distribution for each topic $i$, where $\Delta^K$ denotes the $K$-dimensional simplex. $\{\phi_i\}_{i=1}^T$ also follow a Dirichlet distribution, i.e., $\phi_i \sim \textrm{Dir}(\beta)$ where $\beta = 0.5$.
    \item $z_{s, k}$ is the topic assignment for item $k$ in source $s$ and follows a multinomial distribution: $z_{s, k} \sim \text{Multi}(\theta_s)$
    \item $w_{s, k}$ is the indicator of item $k$ is retrieved from source $s$ and follows multinomial distribution: $w_{s, k} \sim \text{Multi}(\phi_{z_{s, k}})$.
    \item $Q_i$ is the relevance score of each topic $i$ and is sampled from a log-normal distribution with mean $0.25$ and standard deviation $1.5$.
    \item $m_k$ is the relevance score of each item $k$, which is derived as the weighted average across topic scores based on the item distribution over $8$ latent topics, i.e., $m_k = \sum_{i=1}^{T} \phi_{i, k} Q_i$.
    \item $c_s$ is the infrastructure cost per fetched item for source $s$. The cost $c_s$ is assumed to be positively correlated with source relevance score $q_s = \sum_{i=1}^{T} \theta_{s, i} Q_i$ and follows a Gaussian distribution with mean $\frac{q_s}{2 \sum_{s=1}^S q_s}$ and standard deviation of $0.1$.
\end{itemize}

To simulate the retrieval of one item from the source $s$, we sample a topic for an item $k$ from the multinomial $\text{Multi}(\theta_s)$, i.e., $z_{s, k} \sim \text{Multi}(\theta_s)$, and sample an item $w_{s, k} \sim \text{Multi}(\phi_{z_{s, k}})$ where $w_{s, k}$ indicates item $k$ being retrieved from source $s$.
Given the sourcing policy $\mathbf{x} \in \mathbb{R}^S$, we execute the above sampling $\mathbf{x_s}$ times for each source $s$ as described at lines~\ref{algo:item-retrieval} in Algorithm~\ref{algo: recsus_sim}, and then compute the quality score given a list of retrieved items.

The overall content relevance score is the sum of the content relevance scores after de-duplicating the retrieved content.
The infrastructure load is a sum of products of a number of retrievals and the cost per fetched item $c_s$ for each source, in which $c_s$ varies across sources and positively correlates with the source relevance score.
This setup is based on the real-world observation that sources providing higher relevance content are generally more computationally expensive.
The objective in the benchmark experiments is a weighted sum of overall content relevance and negative infrastructure load.
In the experiment, we repeat this simulation (at line~\ref{algo:sourcing}) $1000$ times for a given policy and compute the mean and standard error of the objective values, which we refer to as the \emph{quality score} in the main text.

\begin{algorithm}
\caption{Recsys Simulation}
\label{algo: recsus_sim}
\begin{algorithmic}[1]
\Procedure{Item-retrieval}{$x_s$}
    \label{algo:item-retrieval}
    \State $\vec{n}_s \gets \overrightarrow{0} \in \mathbb{R}^K$  \Comment{number of retrievals for $K$ distinct items}
    \For{$n \gets 1$ to $x_s$} \Comment{retrieve $x_s$ items}
    \State Sample a topic for an item $k$ in source $s$ i.e. $z_{s, k} \sim \text{Multi}(\theta_s)$
    \State Sample an item $w_{s, k} \sim \text{Multi}(\phi_{z_{s, k}})$
    \State $\vec{n}_s \gets \vec{n}_s + \vec{w}_s$
    \EndFor
    \State \textbf{return} $\vec{n}_s$
\EndProcedure
\vspace{1em}
\Procedure{Sourcing}{$\mathbf{x}$}
    \label{algo:sourcing}
    \State $\vec{n} \gets \overrightarrow{0} \in \mathbb{R}^K$  \Comment{number of retrievals for $K$ distinct items}
    \For{$s \gets 1$ to $S$} \Comment{retrieve items for each source $s$}
    \State $\vec{n}_s \gets$ \Call{Item-retrieval}{$x_s$}
    \State $\vec{n} \gets \{\vec{n} + \vec{n}_s\}$
    \EndFor

    \State Compute relevance score $\text{RS} = \sum_{k=1}^K \mathbbm{1}(n_k > 0)m_k$ and infrastructure cost $C = \sum_{s=1}^S c_s \times x_s$
    \State \textbf{return} quality score $Q = \text{RS} - 0.6 \times C$
\EndProcedure
\end{algorithmic}
\end{algorithm}

\subsection{Hypervolume trace plots}
\label{subsec: Supp-hv}
We evaluate optimization performance by showing the average best obtained hypervolume across 20 replicates, with 95\% confidence interval over 100 trials. The results are shown for the sourcing problem (left), the SVM problem (middle) and the Hartmann6 function embedded into a 50D (right) in Figure~\ref{fig:hv_plot}. It can be seen that SEBO-$L_{0}$ (red traces) outperforms all the other methods and achieved the best hypervolume value over 100 iterations. The IR and ER methods with well selected regularization parameter values can sometimes achieve competitive results and usually outperform the methods with non-regularized acquisition functions, e.g. SAASBO.
\begin{figure*}[ht!]
    \centering
    \includegraphics[width=0.99\textwidth]{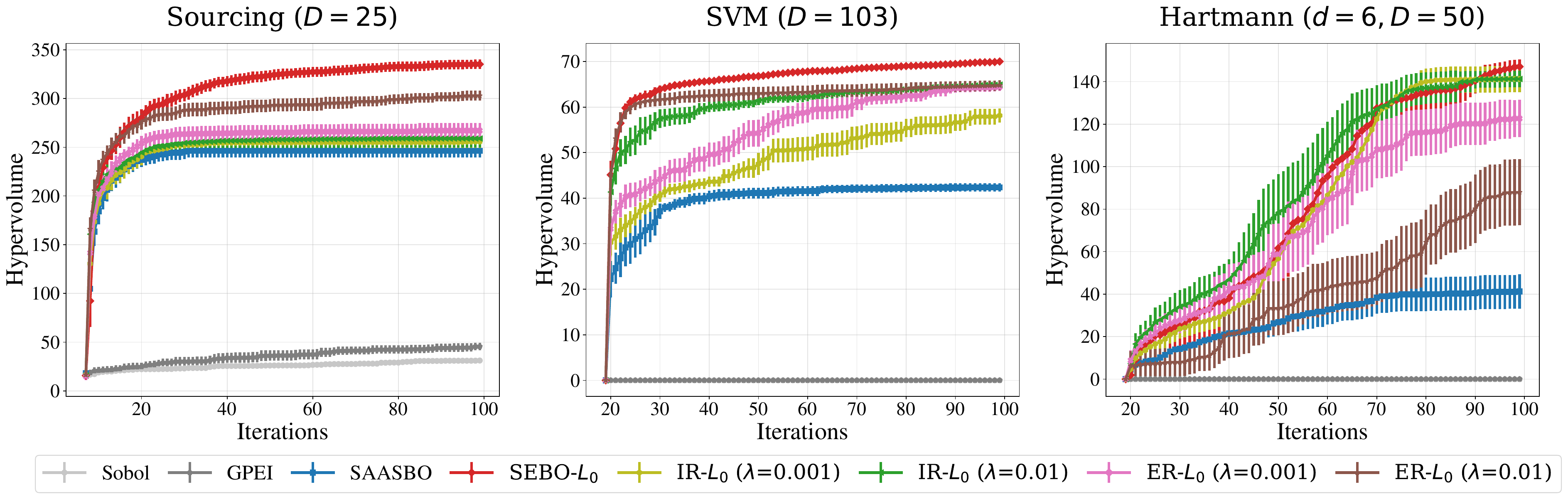}
    \caption{Hypervolume benchmark traces. (Left) Sourcing problem.(Middle) SVM problem. (Right) Hartmann6 function embedded into a 50D. The results are the average best hypervolume (with 95\% confidence interval) obtained over 100 iterations across 20 replications. SEBO-$L_{0}$, shown in red, performs the best in all three problems.}
    \label{fig:hv_plot}
\end{figure*}

\subsection{Sensitivity analysis of regularization parameter \texorpdfstring{$\lambda$}{lambda}}
\begin{figure*}[ht!]
    \centering
    \includegraphics[width=0.85\textwidth]{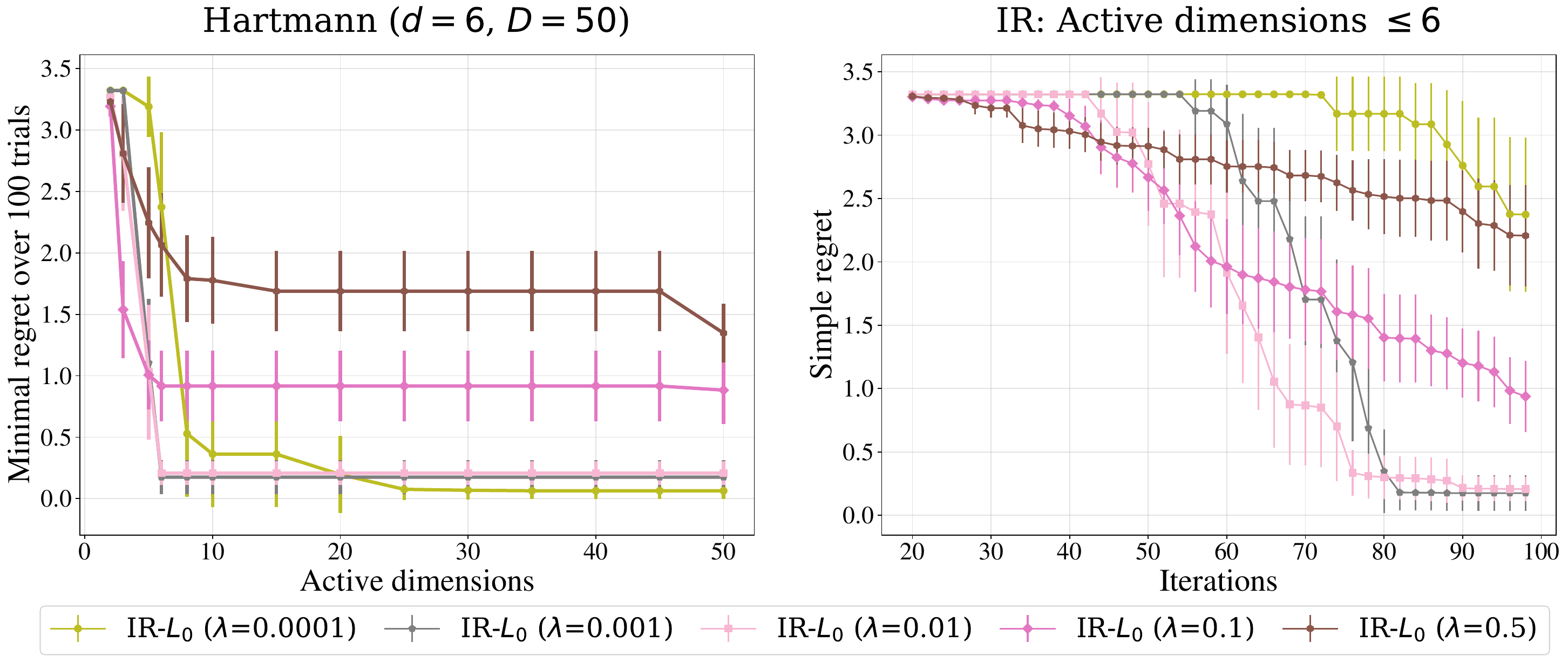}
    \caption{Results of IR with different $\lambda$ values for Hartmann6 function embedded into a $50$D space.
    (Left) The objective-sparsity trade-off after all $100$ iterations.
    (Right) The simple regret considering only observations with at most $6$ active (non-sparse) parameters.}
    \label{fig:hartmann6_ir_sweep}
\end{figure*}

\begin{figure*}[ht!]
    \centering
    \includegraphics[width=0.85\textwidth]{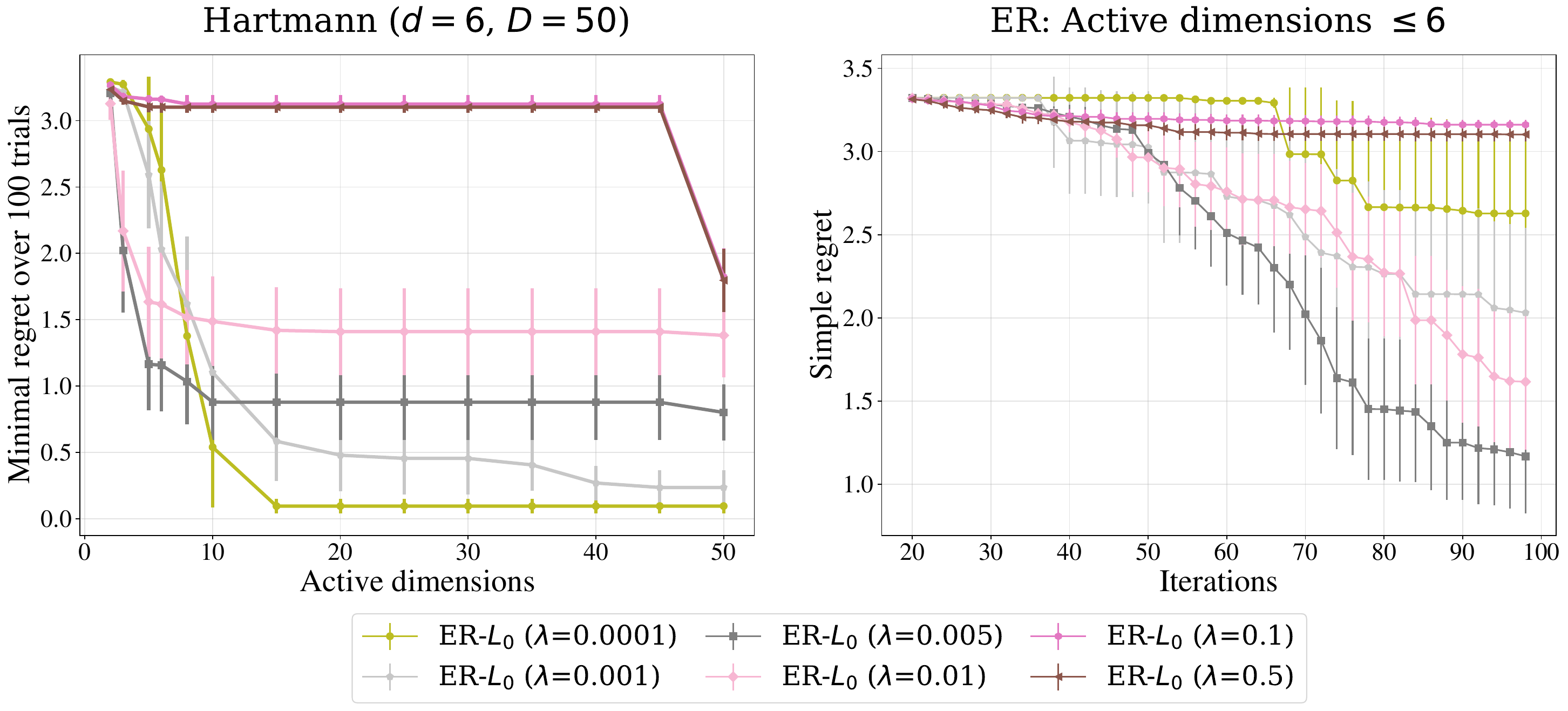}
    \caption{Results of ER with different $\lambda$ values for Hartmann6 function embedded into a $50$D space.
    (Left) The objective-sparsity trade-off after all $100$ iterations.
    (Right) The simple regret considering only observations with at most $6$ active (non-sparse) parameters.}
    \label{fig:hartmann6_er_sweep}
\end{figure*}

We conduct a sensitivity analysis of regularization parameter $\lambda$ used by IR and ER by sweeping different values of $\lambda$ on the $50$D Hartmann6 benchmark.
The results are given in Fig.~\ref{fig:hartmann6_ir_sweep} and Fig.~\ref{fig:hartmann6_er_sweep}.
We observe that we are able to control the sparsity level by appropriately choosing $\lambda$.
In general, larger $\lambda$ implies stronger regularization and results in finding configurations with a higher level of sparsity.
When $\lambda$ increases above a certain point, the regularization becomes too strong and fails to help find high-quality sparse points.

By comparing results of IR and ER for different $\lambda$ values, we note that IR is able to achieve effective optimization performance over a wider range of $\lambda$'s while ER is more sensitive to the value of $\lambda$.
This validates the discussion about ER in Section~\ref{sec:external} that ER is not as effective as IR due to ER's inability to select a new sparse point that improves over sparse points from previous iterates if the new sparse point does not improve on the dense points that are already observed.

\subsection{Benchmarks with \texorpdfstring{$L_1$}{L1} regularization}
Our proposed method can work together with different forms of sparsity. Here we show the results of ER, IR and SEBO using $L_0$ or $L_1$ regularization for the Hartmann6 function embedded in a $50$D space. As can been seen in Fig.~\ref{fig:synthetic_l1}, using $L_0$ leads to significant improvement over $L_1$ for all three methods.

\begin{figure*}[!ht]
    \centering
    \includegraphics[width=0.85\textwidth]{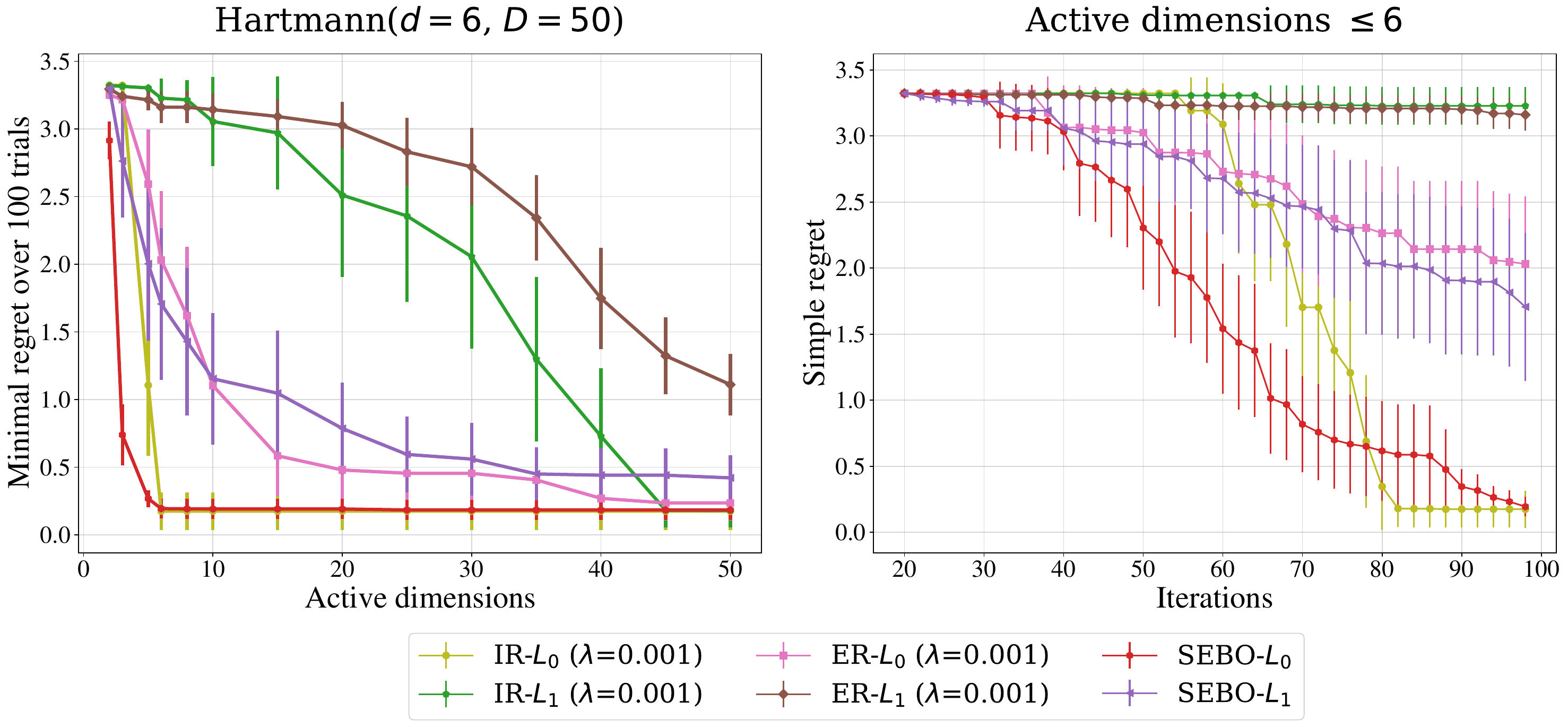}
    \caption{Results for the Hartmann6 function embedded in a $50$D space. (Left) $L_0$ regularization outperforms $L_1$ regularization in exploring the objective-sparsity trade-offs for IR, ER and SEBO. (Right) $L_0$ regularization obtains better optimization performances considering only observations with at most $6$ active (non-sparse) parameters.}
    \label{fig:synthetic_l1}
\end{figure*}

\subsection{Sensitivity Analysis of \texorpdfstring{$a_\mathrm{start}$}{a\_start} in SEBO-\texorpdfstring{$L_{0}$ }{L0} optimization}
\label{apdx:a_ablation}
\begin{figure*}[ht!]
    \centering
    \includegraphics[width=0.85\textwidth]{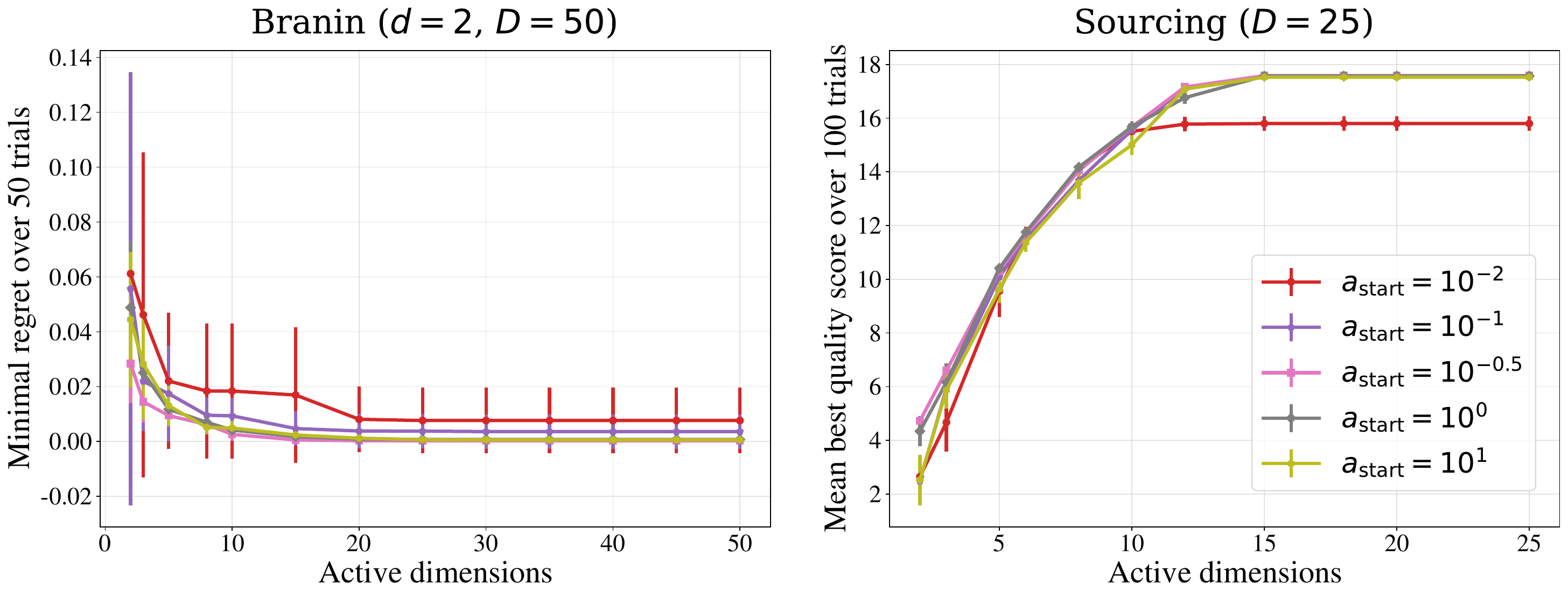}
    \caption{Ablation study of $a_{\mathrm{start}}$ in SEBO-$L_{0}$. (Left). Results of Branin $(d=2, D=50)$. (Right). Results of Sourcing $(D=25)$. There is no statistically significant difference between using different $a_{\mathrm{start}}$ except for the extremely small $a_{\mathrm{start}}$ ($=10^{-2}$). This shows the robustness of having a default $a_{\mathrm{start}}$ for optimizing SEBO-$L_{0}$ acquisition function.}
    \label{fig:ablation_ell_starting}
\end{figure*}

The value of $a_\mathrm{start}$ is set to be $10^{-0.5}$ for all the experiments.
To better understand the robustness of this choice we conducted an ablation study on the Branin$(d=2, D=50)$ and Sourcing $(D=25)$ problems considered in Section~\ref{sec:experiments}.
The results in Figure~\ref{fig:ablation_ell_starting} show that there is no statistically significant difference between using $10^{-1}$, $10^{-0.5}$, $10^{0}$ and $10^{1}$ as the value of $a_\mathrm{start}$.
However, using a value of $10^{-2}$ leads to a clear drop in performance as this starting value is too small to optimize the acquisition function.

\subsection{Ablation study on using SAAS}
\label{sec: ablation saas}
\begin{figure*}[!ht]
    \centering
    \includegraphics[width=0.85\textwidth]{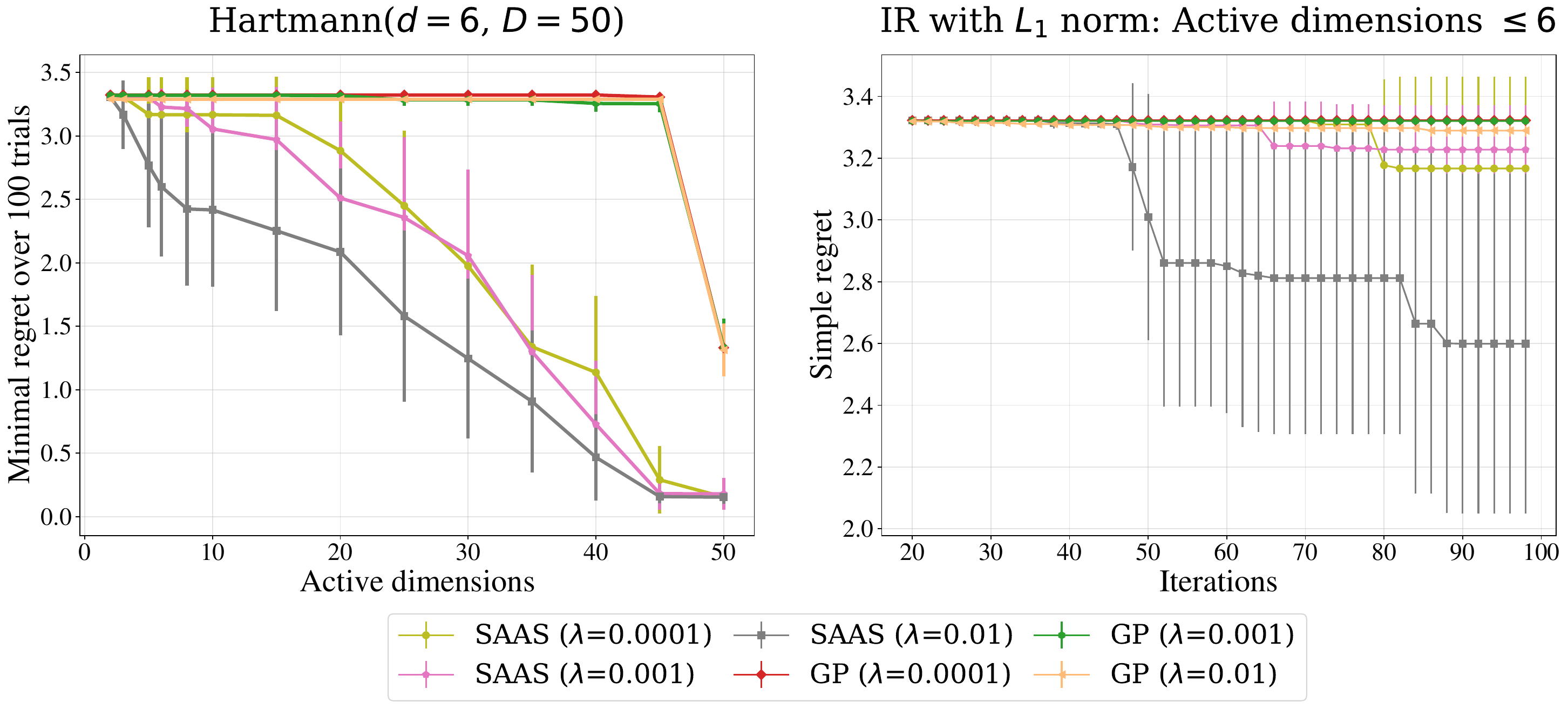}
    \caption{Results for the Hartmann6 function embedded in a $50$D space.
    IR-$L_1$ using the SAAS model significantly outperforms IR-$L_1$ using a standard GP.}
    \label{fig:compare_gp}
\end{figure*}

To illustrate the importance of using the SAAS model, we compare to using IR-$L_1$ with a standard GP in Fig.~\ref{fig:compare_gp}.
We observe that IR-$L_1$ with a standard GP fails to discover non-trivial sparse configurations for all values of $\lambda$.
This confirms that sparsity in the GP model is crucial for finding sparse configurations.
This can also be observed by comparing performances of SAASBO and GPEI in Fig.~\ref{fig:synthetic} where there is a huge gap in terms of the best function value optimized even when looking at dense points (active dimensions $=50$).

\subsection{Benchmark with additional HDBO methods}
\label{sec: hdbo comp}
We conduct evaluations of additional high-dimensional BO methods for the Hartmann6 function embedded in a $50$D space, including trust region BO (TuRBO) by~\citep{eriksson2019turbo} and Random Embedding BO (REMBO) by ~\citep{wang2013bayesian}.
The left plot in Figure~\ref{fig:compare_hdbo} shows the trade-off between the objective and sparsity after all $100$ iterations.
Although SAASBO and TuRBO achieve good non-sparse solutions, they fail to obtain sparse solutions.
REMBO does not obtain better sparse solution than SAASBO.
In the right plot, we show the simple regret considering only observations with at most $35$ active (non-sparse) parameters.
SEBO-$L_0$ outperforms these high-dimensional BO since these methods do not encourage sparse solutions.

\begin{figure*}[!ht]
    \centering
    \includegraphics[width=0.825\textwidth]{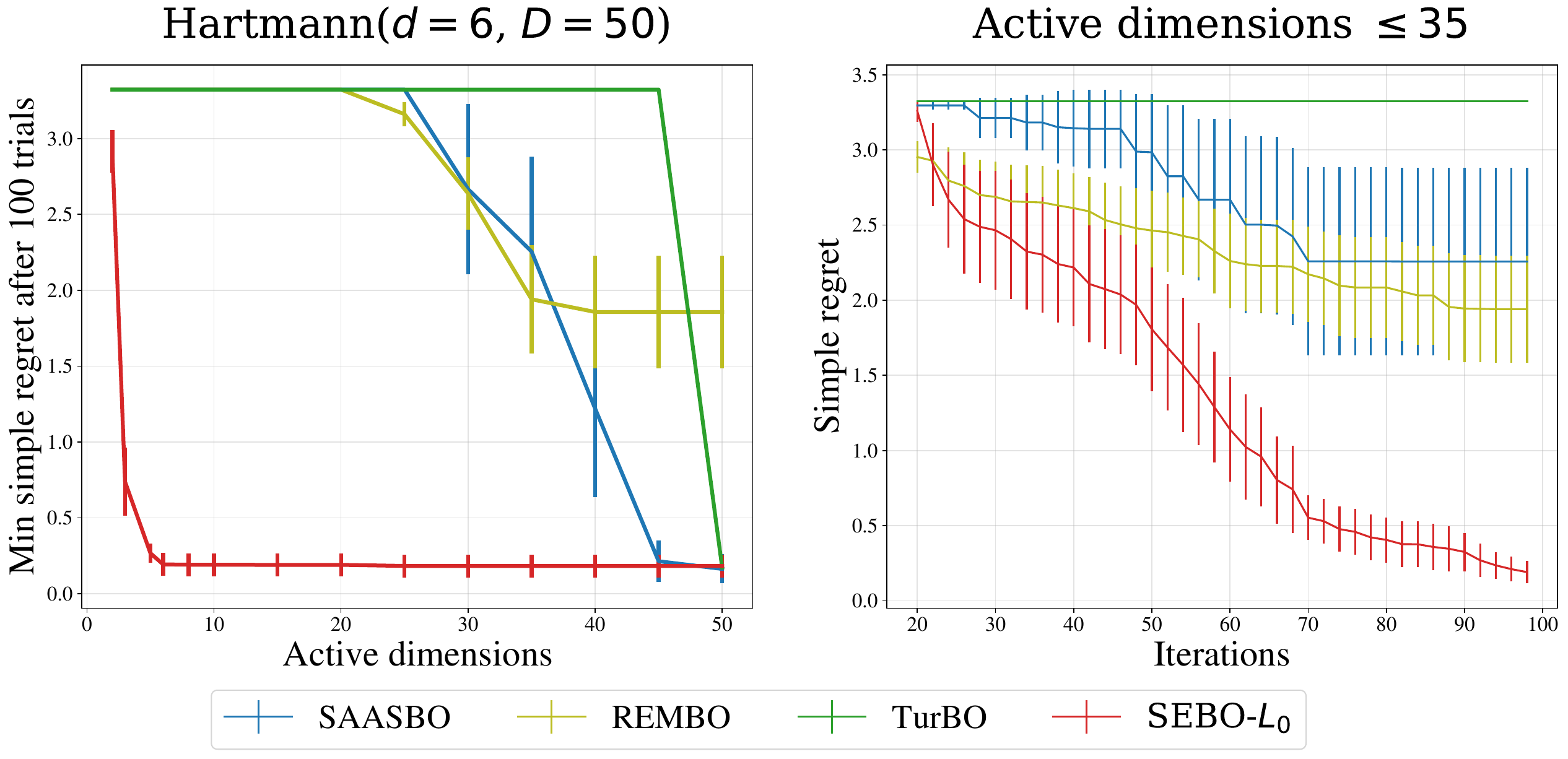}
    \caption{Results of additional high-dimensional BO methods for the Hartmann6 function embedded in a $50$D space.
    (Left) The objective-sparsity trade-off after all $100$ iterations.
    SAASBO and TuRBO, although obtaining competitive objective values with $50$ active parameters, do not encourage sparse solutions.
    (Right) The simple regret for Hartmann6 function considering only observations with at most $35$ active (non-sparse) parameters.}
    \label{fig:compare_hdbo}
\end{figure*}

\subsection{Interpretation of Sparse Solutions}
\label{sec: sparse_sol}
\paragraph{Ranking sourcing system simulation.} We examine what active dimensions are selected in the recommender sourcing system problem to understand the obtained sparse solutions.
For SEBO-$L_0$ results across $20$ replications, we obtain the optimal $25$-dimensional retrieval policy and also compute the average of retrievals per source at each sparsity level.
For each source, we compute a source quality scores based on the simulation setup stated in ~\ref{sec:sourcing_setup}.
Each source contains a mixture over a set of topics with source relevance score being $q_s$ and the infrastructure cost per fetched item being $c_s$. With this, we define and compute the source quality score as $q_s - 4 \times c_s$.
Note the score is computed for each source in order to interpret the obtained solutions and differ from the quality score used in the optimization.

In Figure~\ref{fig:sparse_sol_viz}, the left heatmap visualizes the optimal policy at different sparsity levels across $20$ replications and the middle one visualizes the average retrieval policy values. Each column corresponds to one source and is sorted based on source quality score in an ascending order (from left to right); each row represents the sparsity level (number of active dimensions). The color indicates the parameter values. As it can be seen, sources with low quality scores are turned off (zero query) and sources with higher scores have higher number of retrievals even with smaller active dimensions. This indicates that the sparse policy obtained from SEBO identifies the most effective sources at each sparsity level. The right plot in Figure~\ref{fig:sparse_sol_viz} shows the relationship between number of items retrieved from each source and source quality score with $5$ active parameters. Each dot represents a source. The curve is a fitted spline to visualize the relationship.
From both plots we can see that more items are retrieved from higher quality sources, while the number of items from lower quality sources are driven to zero.

\begin{figure*}[!ht]
    \centering
    \includegraphics[width=0.975\textwidth]{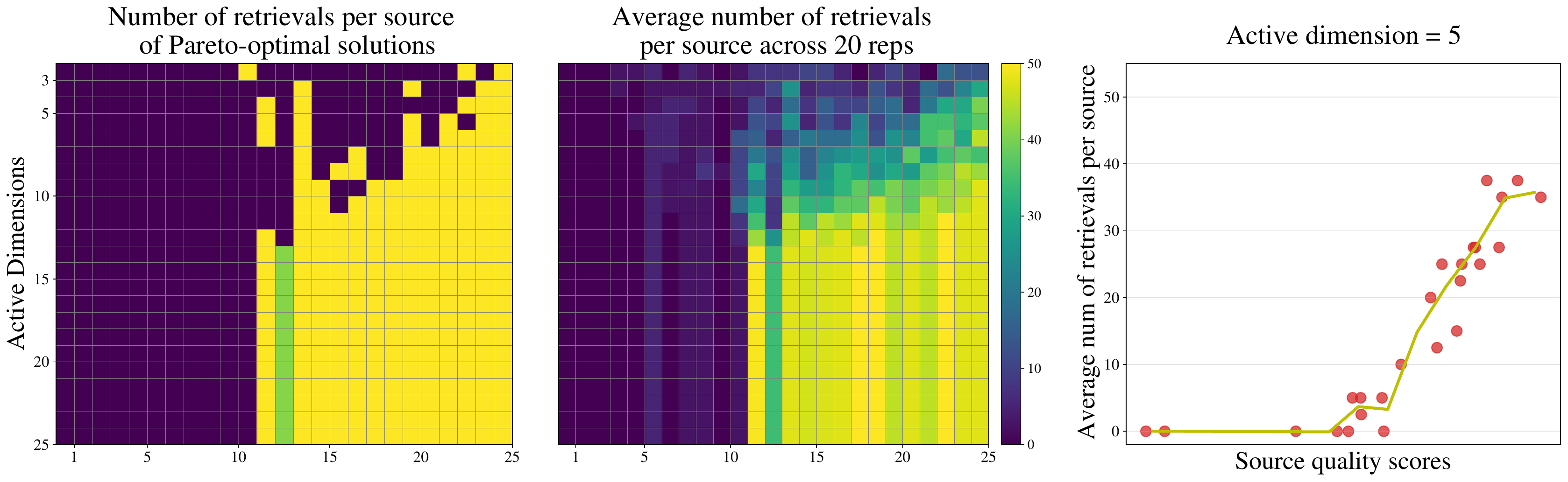}
    %\caption{The relationship between number of items retrieved from each source and source quality score with 5 (left) and 15 (right) active parameters. Each dot represents a source. The curve is a fitted spline to visualize the relationship. From the plots we can see that more items are retrieved from higher quality sources, while the number of items from lower quality sources are driven to zero.}
    \caption{(Left). The heatmap of optimal retrieval policy at different sparsity levels. (Mid) The heatmap of average retrieval policy values at different sparsity levels. (Right) The scatter plot between average retrieval policy values with $5$ active parameters and source quality score. We can see that more items are retrieved from higher quality sources, while the number of items from lower quality sources are driven to zero to achieve sparsity.}
    \label{fig:sparse_sol_viz}
\end{figure*}

\paragraph{Synthetic function - Branin $(d=2, D=50)$.} Similar to Fig.~\ref{fig:ablation} (right), we compute the frequency of each parameter is turned on (non-zero) in the final Pareto frontier for each replication of SEBO-L$0$. These frequencies help us to identify the important parameters and interpret the sparse policies, shown in~Fig.~\ref{fig:interpretation} (left). The two true effective dimensions in augmented Branin $(d=2, D=50)$, colored in orange bars, have the highest frequencies and are identified by SEBO-L$0$.

\paragraph{SVM Machine learning hyperparameter tuning.} Fig.~\ref{fig:interpretation} (right) visualizes the frequency of parameter values being non-sparse in the final Pareto frontier for each replication of SEBO-L$0$. The sparse values are the center of each interval of the three hyperparameters $\gamma$, $C$ and $\epsilon$. For the augmented parameters, values being zero are considered as sparse. The three orange bars correspond to the three effective hyperparameters of the SVM, which obtains high frequencies of being non-sparse. The gray bars, corresponding to the augmented dimensions, have much lower frequencies.

\begin{figure*}[!ht]
    \centering
    \includegraphics[width=0.8\textwidth]{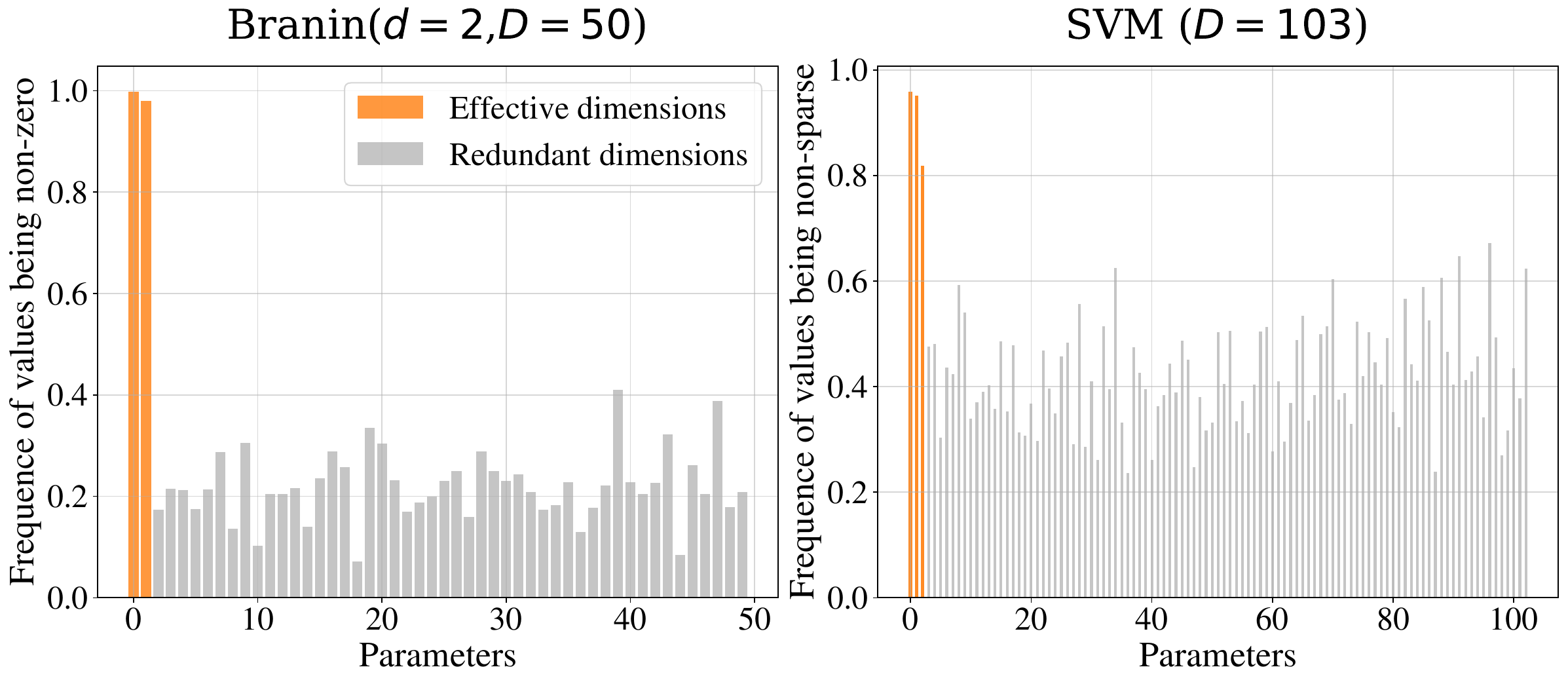}
    \caption{(Left) Branin $(d=2, D=50)$. The $2$ true effective parameters (colored as orange) are more frequently set to be non-zero in Pareto optimal configurations. (Right) SVM $(d=3, D=103)$. The $3$ effective hyperparameters (orange) of the SVM have higher frequencies of being non-sparse compared with the augmented dimensions (gray bars).}
    \label{fig:interpretation}
\end{figure*}

\subsection{Low-dimensional BO Problem}
SEBO can also be applied to low-dim problems using arbitrary GP models as it targets the trade-off between objective and sparsity. In the experiments section (Section~\ref{sec:experiments}), we focus on high-dimensional problems because sparsity (and interpretability) tends to be more important with more parameters. In Figure~\ref{fig:hartmann6_PF}, we compare the performance on the Hartmann6 problem for Sobol, SAASBO, GPEI, and SEBO with $L_0$ or $L_1$ penalty using a standard GP as a surrogate model. This problem is known to have structures where some dimensions are more important than others for maximizing function value.
SEBO-$L_0$ with a standard GP achieves the best trade-off in the low-dimensional (6D) problem.

\begin{figure}[!ht]
    \centering
    \includegraphics[width=0.45\textwidth]{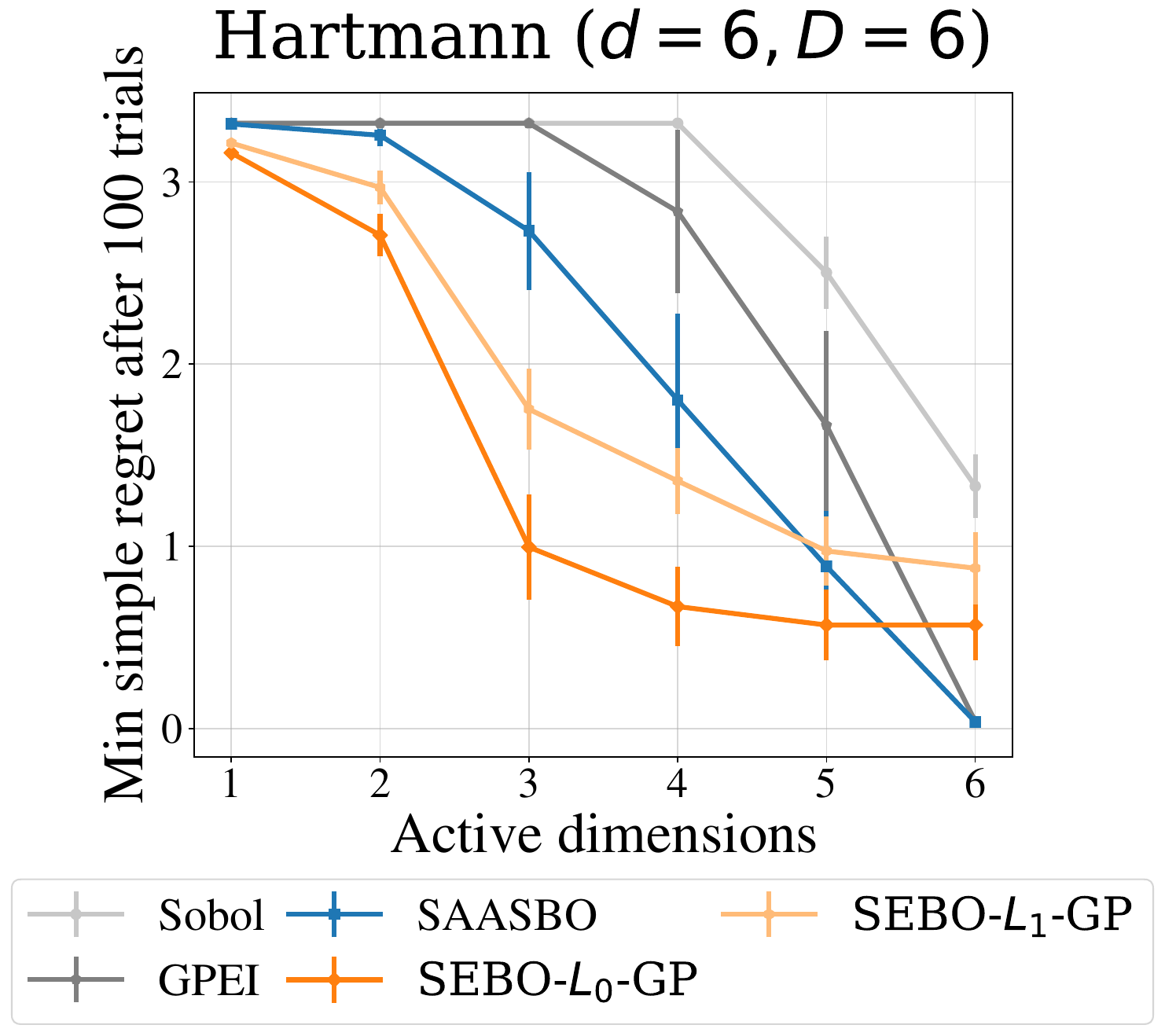}
    \caption{
        Hartmann6 function where 0 is considered sparse. Standard GP (without SAAS model) is used as the GP surrogate model for SEBO-$L_1$ and SEBO-$L_0$.
    }
    \label{fig:hartmann6_PF}
\end{figure}

\thispagestyle{empty}

\end{document}